\documentclass[preprint,3p,times]{elsarticle}
\RequirePackage{multirow,booktabs,subfigure,color,array,hhline,makecell}
\usepackage{amssymb}
\usepackage{amsmath}
\usepackage{graphicx}
\usepackage{amsthm}
\usepackage{mathrsfs}
\usepackage{indentfirst}
\usepackage[colorlinks,citecolor=blue,urlcolor=blue]{hyperref}
\allowdisplaybreaks
\usepackage[table]{xcolor}
\usepackage{tikz-network}
\usepackage{pgf}
\usepackage{tikz}
\usepackage{subfigure}
\usepackage[T1]{fontenc}
\usepackage[utf8]{inputenc}
\usetikzlibrary{arrows, decorations.pathmorphing, backgrounds, positioning, fit, petri, automata}
\usetikzlibrary{shadows,arrows,positioning}
\RequirePackage{algorithm,algpseudocode}
\allowdisplaybreaks
\usepackage{changes}

\usepackage{setspace}
\newtheorem{thm}{Theorem}[section]

\newtheorem{lem}[thm]{Lemma}

\newtheorem{rem}[thm]{Remark}

\allowdisplaybreaks[4]
\DeclareMathOperator{\diag}{diag} 
\AtBeginDocument{%
	\providecommand\BibTeX{{%
			\normalfont B\kern-0.5em{\scshape i\kern-0.25em b}\kern-0.8em\TeX}}}
\journal{~}

\begin{document}

\begin{frontmatter}
\title{A Subsampled Davis–Kahan Bound for Large-Scale Eigenspace Estimation}
\author[label1]{Huan Qing\corref{cor1}}
\ead{qinghuan@cqut.edu.cn; qinghuan@u.nus.edu}
\cortext[cor1]{Corresponding author.}
\address[label1]{School of Economics and Finance, Chongqing University of Technology, Chongqing, 400054, China}

\begin{abstract}
The Davis-Kahan theorem is a fundamental tool in spectral analysis, providing quantitative control over the distance between the eigenspaces of a symmetric matrix and its perturbation. However, when the matrix dimension is large, computing leading eigenvectors is computationally expensive, limiting the practical use of spectral methods in modern large-scale applications. This paper addresses this problem by proposing an independent Bernoulli sampling scheme and proves that the leading left singular vectors of the subsampled matrix faithfully approximate the target subspace of a low-rank symmetric matrix. Our main result is a subsampled Davis--Kahan bound that gives an explicit error bound depending directly on the sampling probability. The bound reveals the trade-off: the computational cost scales linearly with the sampling probability, while the statistical error scales as the inverse square root of the sampling probability. Our result thus extends the Davis-Kahan theorem to the subsampled setting, enabling scalable spectral analysis of large-scale symmetric matrices.
\end{abstract}

\begin{keyword}
Subsampling, Eigenspace estimation, Davis--Kahan theorem, Bernoulli sampling
\end{keyword}
\end{frontmatter}
\section{Introduction}\label{sec:intro}
The classical Davis--Kahan theorem \cite{davis1970} provides the foundation for perturbation analysis of symmetric matrices, offering quantitative control over the distance between the eigenspaces of a symmetric matrix and its perturbed version. This theory was later extended by \cite{wedin1972} to general matrices, where the analysis naturally shifts from eigenvectors to the left and right singular subspaces. For statistical applications, \cite{yu2015} developed a widely used variant that improves the classical results in two important ways: it states the separation condition solely in terms of the eigenvalues of the true population matrix, and it keeps the flexibility of using the smaller of the spectral and Frobenius norms for the perturbation. \cite{yu2015}'s work also discusses extensions to non-square matrices. Collectively, these perturbation bounds have become essential tools for establishing theoretical guarantees for spectral analysis in contemporary statistics.

For network community detection, the stochastic block model (SBM) \cite{holland1983stochastic} and its degree-corrected variant (DCSBM) \cite{karrer2011stochastic} serve as benchmark generative models. Under SBM, \cite{rohe2011} used the theorem to study spectral clustering, and \cite{lei2015} proved the consistency of spectral clustering for both SBM and DCSBM. \cite{joseph2016} further studied regularization in spectral clustering within the SBM framework. Under DCSBM, \cite{qin2013} analyzed regularized spectral clustering, and \cite{jin2015} developed the SCORE method using eigenvector perturbation bounds. \cite{binkiewicz2017} added node covariates to spectral clustering and derived error rates using Davis--Kahan; this covariate-assisted approach has been further extended to multi-layer networks \cite{xu2023covariate}. For more complex settings, the theorem has also been used to establish consistency for robust clustering under sub-Gaussian mixture models with outliers \cite{srivastava2023robust}, motif-based spectral clustering under weighted SBM \cite{guo2024efficacy},  distributed spectral clustering for large-scale networks under SBM and DCSBM \cite{wu2023distributed}, spectral clustering in stock co-jump network analysis with a degree-corrected block model under dependent multivariate Poisson edges \cite{ding2024}, spectral clustering or co-clustering for multi-layer undirected or directed networks under multi-layer SBM or its variants \cite{paul2020spectral,lei2023,su2024,qing2025a,qing2025b}, dynamic spectral clustering for time-varying SBM \cite{lin2026dynamic}, regularized spectral clustering for signed networks \cite{cucuringu2021regularized}, spectral embedding for planted pseudo-clique detection in random dot product graphs \cite{qi2025detection}, a unified framework for community detection and model selection across blockmodels \cite{bhadra2026unified}, a unified spectral embedding framework under the generalized random dot product graph \cite{rubindelanchy2022statistical}, and tensor spectral methods for network-valued data in the semi-symmetric tensor PCA framework \cite{weylandt2026multivariate}. A comprehensive overview of spectral methods for data science is given in \cite{chen2021spectral}. For an introduction to spectral clustering, see \cite{vonluxburg2007}.

The Davis--Kahan theorem provides a powerful theoretical tool for analyzing the error of spectral methods. However, these spectral methods themselves require computing the leading eigenvectors of a large \(p\times p\) symmetric matrix. Even when the target rank \(d\) is much smaller than \(p\), a full eigendecomposition still requires \(O(dp^2)\) operations. For large \(p\), this quickly becomes infeasible. For large-scale matrix computations, randomized and subsampling methods have become popular tools for reducing computational cost \cite{mahoney2011, drineas2016}. The main idea is to build a much smaller ``sketch" of the original matrix that still keeps its key features. Two common approaches are random projection and random sampling. Random projection lowers the matrix size by multiplying it with a random matrix \cite{halko2011, witten2015, martinsson2016}. Random sampling picks a subset of the entries, columns, or rows, usually with carefully chosen non-uniform probabilities to keep the error under control \cite{drineas2006, drineas2011, drineas2012}. These ideas have been used successfully in least-squares problems \cite{drineas2006, drineas2011, drineas2012} and low-rank matrix approximation \cite{halko2011, martinsson2016, witten2015}.

In network analysis, several recent papers have used these randomized tools for accelerating spectral clustering, but they take different paths. \cite{zhang2022randomized} studied undirected networks under SBM and DCSBM. Their method either projects the adjacency matrix onto a lower-dimensional space or samples its entries to make the matrix sparser, which speeds up the matrix decomposition while keeping good theoretical performance. \cite{su2025randomized} further extended \cite{zhang2022randomized}'s work to multi-layer networks under the multi-layer SBM. \cite{deng2024} chose a different path: they randomly select a small group of nodes and build a rectangular matrix that records the connections from all nodes to the sampled ones. They then apply a truncated singular value decomposition (SVD) followed by k-means clustering to obtain community labels, and establish theoretical guarantees for this approach under SBM and DCSBM. Although these methods work well in practice, their theoretical analyses and error bounds are model-specific, relying on quantities particular to SBM or its variants. Consequently, their results cannot be extended to spectral methods applied to other types of matrices, nor can they be used to understand column subsampling in more general settings. Furthermore, over the past decade, \cite[Theorem~2]{yu2015} has been a standard tool in the theoretical analysis of many of the aforementioned spectral methods that estimate the leading eigenvectors of a symmetric matrix. However, those methods themselves become computationally expensive for large-scale matrices. While column subsampling offers a natural acceleration strategy, it breaks the symmetry of the matrix: the resulting object of analysis is rectangular, not symmetric. The setting of \cite[Theorem~2]{yu2015}---which compares two symmetric matrices---therefore does not directly apply. A new theoretical framework is needed to handle this cross-type comparison. These motivate our work, which develops such a framework for using the leading left singular vectors of a column-subsampled rectangular matrix to estimate the leading eigenspace of the original low-rank symmetric matrix.

Specifically, we extend the classical Davis--Kahan theorem to the large-scale setting for symmetric matrices, where the leading eigenspace is estimated from a randomly selected subset of columns rather than from the full matrix. Let \(\hat{\Sigma} \in \mathbb{R}^{p \times p}\) be the observed symmetric data matrix, and let \(\Sigma \in \mathbb{R}^{p \times p}\) denote its noiseless low-rank version, with \(\operatorname{rank}(\Sigma)=d \ll p\), whose leading \(d\)-dimensional eigenspace is our primary object of interest. To reduce computational cost, we apply an independent Bernoulli sampling scheme to the columns of \(\hat{\Sigma}\): each column is retained with probability \(\alpha \in (0,1]\), independently of all others. Retaining \(n\) columns with \(n \leq p\), we form a rectangular matrix \(H \in \mathbb{R}^{p \times n}\), and then use the leading \(d\) left singular vectors of \(H\) as proxies for the target eigenspace of \(\Sigma\). This extension, however, is not a direct application of the classical result. The key issue is that column subsampling breaks the symmetry of the data matrix: the matrix we operate on is no longer the square symmetric matrix \(\hat{\Sigma}\) or \(\Sigma\), but the rectangular matrix \(H\), whose singular vectors are governed by a perturbation theory different from that for symmetric eigenproblems. As a result, the classical Davis--Kahan theorem, which is designed specifically for symmetric matrices, no longer applies directly. To address this, our main result establishes a subsampled Davis--Kahan bound that controls the Frobenius-norm error between the target \(d\)-dimensional eigenspace of the noiseless low-rank symmetric matrix \(\Sigma\) and the leading \(d\) left singular vectors of the subsampled rectangular matrix \(H\). This theorem holds for any symmetric matrix without imposing any generative model assumptions. The resulting error bound depends explicitly on the sampling probability, the noise level, and the spectral gap, and reveals a clear quantitative trade-off: the statistical error grows inversely with the square root of the sampling probability, while the computational cost decreases proportionally to the sampling probability. We illustrate the practical utility of this general framework through an application to the stochastic block model and validate our theoretical findings with numerical experiments.

The rest of this paper is organized as follows. Section~\ref{sec:setup} introduces the problem formulation. Section~\ref{sec:main} presents the main theoretical results. Section~\ref{sec:discussion} discusses the practical implications of our bound and a detailed application to the stochastic block model. Section~\ref{sec:numexp} provides numerical experiments. Section~\ref{sec:conclusion} concludes the paper. All technical proofs are collected in the Appendix.
\section{Problem Formulation}
\label{sec:setup}

Let $\Sigma \in \mathbb{R}^{p \times p}$ be a symmetric matrix of rank $d \ll p$. Write its compact spectral decomposition as
\begin{align*}
\Sigma = V \Lambda V^\top,
\end{align*}
where $V \in \mathbb{R}^{p \times d}$ has orthonormal columns and $\Lambda = \diag(\lambda_1,\lambda_2,\dots,\lambda_d)$ contains the $d$ nonzero eigenvalues. The column space of $V$ is often the quantity of primary interest: in community detection, for instance, it encodes the community membership structure of the nodes. Let
\begin{align*}
\delta := \min_{1 \le i \le d} |\lambda_i| > 0,
\end{align*}
where we do not require $\Sigma$ to be positive semidefinite.

We observe a noisy version of $\Sigma$:
\begin{align*}
\hat{\Sigma} = \Sigma + E, 
\end{align*}
with $E \in \mathbb{R}^{p \times p}$ is symmetric. Let $\|E\|_2$ and $\|E\|_F$ denote $E$'s spectral and Frobenius norms. To estimate the target eigenspace $V$, a standard approach is to compute the leading $d$ eigenvectors $\hat V$ of the $p \times p$ symmetric matrix $\hat{\Sigma}$. The classical Davis--Kahan theorem bounds the error between $V$ and $\hat V$ in terms of $\|E\|_2$ and $\delta$; see \cite{yu2015} for a refined variant more directly suited to statistical applications. However, computing $\hat V$ from the $p\times p$ symmetric matrix $\hat{\Sigma}$ costs $O(dp^2)$ time, which is prohibitive when $p$ is large.

To reduce the cost, we randomly subsample the columns of $\hat{\Sigma}$.
Let $\zeta_1,\zeta_2,\dots,\zeta_p \in \{0,1\}$ be independent Bernoulli
variables with $\mathbb{P}(\zeta_i=1)=\alpha$ for $1\leq i\leq p$, where $\alpha\in (0,1]$ is the
sampling probability. Retain column $i$ if $\zeta_i=1$, and let $S \in \mathbb{R}^{p \times n}$ be the column
selection matrix whose columns are the selected standard basis vectors
$\{e_i : \zeta_i=1\}$; hence $S$ satisfies $S^\top S = I_n$ and $SS^{\top}=\mathrm{diag}(\zeta_1,\zeta_2,\dots,\zeta_p)$. Here, 
\begin{align*}
n =\sum_{i=1}^p \zeta_i
\end{align*}
is the number of retained columns. The subsampled matrix is then
defined as 
\begin{align*}
H = \hat{\Sigma} S \in \mathbb{R}^{p \times n}
\end{align*}
We then use $H$'s leading $d$ left singular vectors $\hat V_H \in \mathbb{R}^{p \times d}$ as an estimator of $V$.

Constructing $H$ by extracting the selected columns from $\hat{\Sigma}$
costs $O(p n)$ operations. Computing the leading $d$ left singular
vectors of $H$ via a standard truncated singular value decomposition
costs $O(d p n)$ operations. Since $\mathbb{E}[n] = \alpha p$, the expected total time complexity is $O(\alpha d p^2)$, which can be substantially less than the full-matrix approach requiring $O(d p^2)$ operations when $\alpha$ is small.

The central question is whether the leading $d$ left singular vectors
$\hat V_H$ of the $p\times n$ subsampled matrix $H$ faithfully approximate the leading $d$ eigenvectors $V$ of the $p\times p$ symmetric matrix $\Sigma$ despite column subsampling, and how the approximation error depends on the sampling probability $\alpha$. We establish a non-asymptotic bound on the Frobenius-norm distance between $V$ and $\hat V_H$ (up to an orthogonal rotation) that makes this dependence explicit.

Table~\ref{tab:notation} summarizes the notations used throughout.
\begin{table}[htbp]
\centering
\caption{Table of notations.}
\label{tab:notation}
\resizebox{1\textwidth}{!}{%
\begin{tabular}{l >{\raggedright\arraybackslash}p{0.48\textwidth} l >{\raggedright\arraybackslash}p{0.48\textwidth}}
\toprule
Symbol & Definition & Symbol & Definition \\
\midrule
$\mathbb{R}$ & Real numbers &
$W$ & Noise part, $W = E S \in \mathbb{R}^{p \times n}$ \\
$p$ & Matrix dimension &
$\hat{V}_H$ & Top-$d$ left singular vectors of $H$ \\
$d$ & Target rank, $d \ll p$ &
$V_B$ & Top-$d$ left singular vectors of $B$ \\
$\rho$ & Sparsity parameter, $0 < \rho \le 1$ &
$\mu$ & Incoherence parameter, $\mu = \frac{p}{d} \|V\|_{2,\infty}^2$ \\
$\Sigma$ & True low-rank symmetric matrix, $\Sigma \in \mathbb{R}^{p \times p}$, $\operatorname{rank}(\Sigma)=d$ &
$\hat{O}$ & Orthogonal rotation matrix, $\hat{O} \in \mathbb{R}^{d \times d}$, $\hat{O}^\top \hat{O} = I_d$ \\
$E$ & Symmetric noise matrix&
$\sigma_i(\cdot)$ & $i$-th largest singular value \\
$\hat{\Sigma}$ & Observed matrix, $\hat{\Sigma} = \Sigma + E$ &
$\|\cdot\|_2$ & Spectral norm \\
$V$ & Top-$d$ eigenvector matrix, $V \in \mathbb{R}^{p \times d}$, $V^\top V = I_d$ &
$\|\cdot\|_F$ & Frobenius norm \\
$\Lambda$ & Eigenvalue diagonal matrix, $\Lambda = \diag(\lambda_1,\dots,\lambda_d)$ &
$\|\cdot\|_{2,\infty}$ & Entrywise max row norm, $\|M\|_{2,\infty} = \max_i \|M_{i,:}\|_F$ \\
$\delta$ & Minimum absolute eigenvalue, $\delta = \min_{1 \le i \le d} |\lambda_i|$ &
$\mathbb{P}$ & Probability \\
$\alpha$ & Bernoulli sampling probability, $0 < \alpha \le 1$ &
$\mathbb{E}$ & Expectation \\
$\{\zeta_i\}_{i=1}^p$ & Bernoulli variables, $\zeta_i \sim \mathrm{Bernoulli}(\alpha)$ &
$C$ & Generic positive constant \\
$n$ & Sampled columns, $n = \sum_{i=1}^p \zeta_i$ &
$Z$ & Membership matrix, $Z \in \{0,1\}^{p \times d}$ \\
$S$ & Column selection matrix, $S \in \mathbb{R}^{p \times n}$, $S^\top S = I_n$ &
$B_0$ & Connectivity matrix, $B_0 \in [0,1]^{d \times d}$ \\
$H$ & Subsampled matrix, $H = \hat{\Sigma} S \in \mathbb{R}^{p \times n}$ &
$g_i$ & True community label of node $i$ \\
$B$ & Signal part, $B = \Sigma S \in \mathbb{R}^{p \times n}$ &
$\hat{g}_i$ & Estimated community label of node $i$ \\
\bottomrule
\end{tabular}%
}
\end{table}
\section{Main Results}
\label{sec:main}

This section presents the main theoretical results of this paper. We begin by establishing three key lemmas that serve as the building blocks of our analysis. These lemmas address, respectively, the well-definedness of the subsampled estimator, the preservation of the target subspace under Bernoulli sampling, and the deterministic perturbation bound that controls the effect of noise. We then combine these ingredients to prove the main theorem, which gives an explicit error bound for the subsampled eigenspace estimator.

\subsection{Preliminary Lemmas}
We start with a lemma that ensures the subsampled matrix has at least \(d\) columns with high probability. This is necessary for the top-\(d\) singular vectors to be well-defined.
\begin{lem}\label{lem:n_lower_bound}
For  $p \ge 2$ and $d \ge 1$, suppose that
\begin{align}\label{eq:alpha_condition_n}
\alpha\ge 16 \frac{d \log p}{p}.
\end{align}
Then, we have
\begin{align*}
\mathbb{P}(n \ge d) \ge 1 - p^{-2d}.
\end{align*}
\end{lem}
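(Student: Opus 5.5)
Since $n=\sum_{i=1}^p \zeta_i$ is a sum of independent Bernoulli$(\alpha)$ variables with mean $\mathbb{E}[n]=\alpha p$, we bound the lower tail with the multiplicative Chernoff bound
\begin{align*}
\mathbb{P}\bigl(n \le (1-\epsilon)\alpha p\bigr) \le \exp\Bigl(-\tfrac{\epsilon^2 \alpha p}{2}\Bigr), \qquad \epsilon\in(0,1).
\end{align*}
We take $\epsilon = 1/2$. Condition \eqref{eq:alpha_condition_n} gives $\alpha p \ge 16 d\log p$, so $(1-\epsilon)\alpha p = \alpha p/2 \ge 8 d \log p$. The event $\{n<d\}$ is contained in $\{n \le \alpha p/2\}$ as soon as $d \le \alpha p/2$. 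This holds because $8d\log p \ge d$ whenever $\log p \ge 1/8$, which is true for all $p\ge 2$ since $\log 2\approx 0.69$.

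It remains to compare the tail with $p^{-2d}$. With $\epsilon=1/2$ the exponent is $\alpha p/8 \ge 2 d\log p$, so
\begin{align*}
\mathbb{P}(n<d)\le \mathbb{P}(n\le \alpha p/2)\le \exp(-\alpha p/8)\le \exp(-2d\log p)=p^{-2d}.
\end{align*}
Taking complements yields $\mathbb{P}(n\ge d)\ge 1-p^{-2d}$.

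The only delicate point is matching the constant $16$ in \eqref{eq:alpha_condition_n}. The Chernoff form $\exp(-\epsilon^2\mu/2)$ with $\epsilon=1/2$ gives exactly $\mu/8$, which is precisely what the constant $16$ requires. A weaker form such as $\exp(-\epsilon^2\mu/3)$ would not suffice, so I will state the lower-tail version with denominator $2$; it follows from the standard bound $\exp(-\mu[\epsilon+(1-\epsilon)\log(1-\epsilon)])$ together with $(1-\epsilon)\log(1-\epsilon)\ge -\epsilon+\epsilon^2/2$.

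As a sanity check, one can also avoid Chernoff: if $\alpha=1$, then $n=p\ge d$ trivially, since $16d\log p/p\le 1$ forces $p\ge d$. For general $\alpha$, the argument above applies uniformly.
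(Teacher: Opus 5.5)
Your proposal is correct and follows essentially the same argument as the paper: the multiplicative Chernoff lower-tail bound $\exp(-\epsilon^2 p\alpha/2)$ is reduced to $\exp(-p\alpha/8)\le p^{-2d}$ using $p\alpha\ge 16d\log p$. The only difference is cosmetic: you fix $\epsilon=1/2$ and check $d\le p\alpha/2$, whereas the paper sets $\tau=1-d/(p\alpha)$ so that $(1-\tau)p\alpha=d$ exactly and then uses only $\tau>1/2$ (in fact $\tau\ge 1-1/(16\log 2)>0.9$, so the paper's choice would even work with the weaker $\exp(-\tau^2 p\alpha/3)$ form, which, as you note, fails for $\epsilon=1/2$).
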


The following lemma is the key technical ingredient of our analysis.
It establishes that, with high probability, the randomly sampled
columns preserve the \(d\)-dimensional structure of the target
subspace. 
\begin{lem}\label{lem:subspace}
Let $V \in \mathbb{R}^{p \times d}$ have orthonormal columns and define $\mu := \frac{p}{d}\|V\|_{2,\infty}^2$. Let $S \in \mathbb{R}^{p \times n}$ be the random column selection matrix obtained by independent Bernoulli sampling with probability $\alpha\in (0,1]$. If $\alpha=1$, then $\sigma_d(V^\top S)=1$ deterministically. If $0<\alpha<1$ and
\begin{align}
\alpha\ge 16\,\frac{\mu d \log p}{p}, \label{eq:alpha_subspace}
\end{align}
then, with probability at least $1 - d p^{-2}$, we have
\begin{align*}
\sigma_d(V^\top S) \ge \sqrt{\frac{\alpha}{2}}.
\end{align*}
\end{lem}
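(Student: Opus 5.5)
The plan is to write $V^\top S S^\top V$ as a sum of independent random positive semidefinite $d\times d$ matrices and apply a matrix Chernoff lower-tail bound. If $\alpha=1$, then $S=I_p$, so $V^\top S S^\top V = V^\top V = I_d$ and $\sigma_d(V^\top S)=1$ deterministically. For $0<\alpha<1$, use $SS^\top=\operatorname{diag}(\zeta_1,\dots,\zeta_p)$ to write
\begin{align*}
V^\top S S^\top V = \sum_{i=1}^p \zeta_i\, v_i v_i^\top =: \sum_{i=1}^p X_i,
\end{align*}
where $v_i^\top$ is the $i$-th row of $V$. The $X_i$ are independent and positive semidefinite. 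Their expectations sum to $\alpha V^\top V=\alpha I_d$, so $\mu_{\min}:=\lambda_{\min}(\sum_i\mathbb{E}X_i)=\alpha$.

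Each summand is bounded: $\lambda_{\max}(X_i)\le \|v_i\|_2^2\le \|V\|_{2,\infty}^2 = \mu d/p =: R$. Since $\sigma_d(V^\top S)^2=\lambda_{\min}(V^\top SS^\top V)$, it suffices to show $\lambda_{\min}\ge \alpha/2$ with the stated probability.

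The matrix Chernoff lower tail (Tropp, 2012) with deviation parameter $1/2$ gives
\begin{align*}
\mathbb{P}\Bigl(\lambda_{\min}\bigl(\textstyle\sum_i X_i\bigr)\le \tfrac12\mu_{\min}\Bigr)\le d\,\Bigl(\frac{e^{-1/2}}{(1/2)^{1/2}}\Bigr)^{\mu_{\min}/R} \le d\,\exp\!\Bigl(-\frac{\mu_{\min}}{8R}\Bigr).
\end{align*}
The last step uses that the base is $\sqrt{2/e}\approx 0.858$, and $\log(0.858)\approx -0.153\le -1/8$. Substituting $\mu_{\min}/R=\alpha p/(\mu d)$ and condition \eqref{eq:alpha_subspace} gives $\mu_{\min}/(8R)\ge 16\log p/8=2\log p$. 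The failure probability is therefore at most $d p^{-2}$, which is exactly the claim.

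The main obstacle is the correct application of the matrix Chernoff inequality. Its constants must be checked carefully so that the factor $16$ yields exactly $p^{-2}$, and the bound $R$ on $\lambda_{\max}(X_i)$ must hold uniformly, which it does via the incoherence $\mu$. One subtlety needs addressing: $V^\top S$ has $n$ columns, which is random. Nothing goes wrong here, because $\sigma_d(V^\top S)^2=\lambda_{\min}(V^\top SS^\top V)$ holds regardless of $n$. On the success event the $d\times d$ matrix $V^\top SS^\top V$ is nonsingular, which forces $n\ge d$ automatically, so Lemma~\ref{lem:n_lower_bound} is not needed.
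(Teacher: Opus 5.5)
Your proof is correct and follows the paper's argument essentially step for step. It uses the same decomposition $V^\top SS^\top V=\sum_i \zeta_i v_iv_i^\top$ with $R=\mu d/p$ and $\mu_{\min}=\alpha$, the same Tropp matrix Chernoff lower tail at level $1/2$, and the same observation that nonsingularity on the success event forces $n\ge d$. The only cosmetic difference is that you get the exponent $1/8$ by checking the unsimplified Chernoff base $\sqrt{2/e}$ numerically, whereas the paper quotes the simplified form $d\exp\bigl(-(1-t)^2\mu_{\min}/(2R)\bigr)$ from Tropp's Remark~5.3 directly.
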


We shall also need a deterministic perturbation bound for rectangular
matrices.
\begin{lem}\label{lem:wedin}
Let $B \in \mathbb{R}^{p \times n}$ have rank $d$, and let $H = B + W$, with $W \in \mathbb{R}^{p \times n}$. Denote by $V_B \in \mathbb{R}^{p \times d}$ and $\hat{V}_H \in \mathbb{R}^{p \times d}$ the matrices of the leading $d$ left singular vectors of $B$ and $H$, respectively. If
\begin{align*}
\|W\|_2 \le \frac{1}{2}\,\sigma_d(B),
\end{align*}
then there exists an orthogonal matrix $\hat{O} \in \mathbb{R}^{d \times d}$ such that
\begin{align*}
\|\hat{V}_H \hat{O} - V_B\|_F
\le 
\frac{2\sqrt{2}\,\min\{\sqrt{d}\,\|W\|_2,\; \|W\|_F\}}
{\sigma_d(B)}.
\end{align*}
\end{lem}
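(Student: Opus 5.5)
We reduce Lemma~\ref{lem:wedin} to the symmetric Davis--Kahan bound of \cite[Theorem~2]{yu2015}, applied to the Gram matrices $BB^\top$ and $HH^\top$. The left singular vectors of $B$ and $H$ are exactly the eigenvectors of these $p\times p$ symmetric positive semidefinite matrices. Since $\operatorname{rank}(B)=d$, the matrix $BB^\top$ has eigenvalues $\sigma_1^2(B)\ge\dots\ge\sigma_d^2(B)>0=\sigma_{d+1}^2(B)$. The relevant eigengap for the top-$d$ eigenspace is therefore $\sigma_d^2(B)$, and the top-$d$ eigenvectors of $HH^\top$ are $\hat V_H$. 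The theorem of \cite{yu2015} (in its version with an orthogonal $\hat O$) then gives
\begin{align*}
\|\hat V_H\hat O - V_B\|_F \le \frac{2^{3/2}\min\{\sqrt d\,\|HH^\top-BB^\top\|_2,\ \|HH^\top-BB^\top\|_F\}}{\sigma_d^2(B)}.
\end{align*}
The remaining work is to bound the Gram perturbation. We have $HH^\top-BB^\top = BW^\top + WB^\top + WW^\top$, which is of size $2\|B\|_2\|W\|_2$. This is too crude, because the target ratio involves $\sigma_d(B)$ and not $\|B\|_2$.

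For this reason the Gram route would lose a condition-number factor, and I would instead use Wedin's $\sin\Theta$ theorem directly. I would apply it to the dilations, or equivalently use the residual argument, as follows. Write $V_{B,\perp}$ for the orthogonal complement of $V_B$. Then
\begin{align*}
V_{B,\perp}^\top H = V_{B,\perp}^\top W,
\end{align*}
since $V_{B,\perp}^\top B=0$. Let $\hat U$ span the top-$d$ right singular space of $H$, and let $\hat\Sigma_d$ be the corresponding singular values, so that $H\hat U=\hat V_H\hat\Sigma_d$. This gives
\begin{align*}
V_{B,\perp}^\top \hat V_H = V_{B,\perp}^\top W\hat U\hat\Sigma_d^{-1}.
\end{align*}
Weyl's inequality yields $\sigma_d(H)\ge\sigma_d(B)-\|W\|_2\ge\sigma_d(B)/2$. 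Hence
\begin{align*}
\|\sin\Theta(\hat V_H,V_B)\|_F = \|V_{B,\perp}^\top\hat V_H\|_F \le \frac{\|W\hat U\|_F}{\sigma_d(B)/2}.
\end{align*}
Moreover, $\|W\hat U\|_F\le\min\{\sqrt d\|W\|_2,\|W\|_F\}$, because $\hat U$ has $d$ orthonormal columns. Notice that this route needs no gap involving $\sigma_{d+1}(B)$: we only use that $\sigma_{d+1}(B)=0$ kills $V_{B,\perp}^\top B$.

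Finally, convert the $\sin\Theta$ bound into a Procrustes bound. Let $\hat O$ come from the SVD of $\hat V_H^\top V_B$, namely $\hat O=U_1U_2^\top$ where $\hat V_H^\top V_B=U_1\cos\Theta\, U_2^\top$. The standard identity then gives
\begin{align*}
\|\hat V_H\hat O-V_B\|_F^2 = 2\sum_i(1-\cos\theta_i) \le 2\sum_i\sin^2\theta_i,
\end{align*}
using $1-\cos\theta\le 1-\cos^2\theta$ for $\cos\theta\in[0,1]$. Therefore $\|\hat V_H\hat O-V_B\|_F\le\sqrt2\,\|\sin\Theta\|_F$. Combining this with the previous display yields the constant $2\sqrt2$.

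The main obstacle is the middle step: obtaining a denominator $\sigma_d(B)$ with no $\|B\|_2$ factor. The one-sided residual identity handles this cleanly, and it is where the hypothesis $\|W\|_2\le\sigma_d(B)/2$ enters, through the Weyl lower bound on $\sigma_d(H)$. A secondary point to check is that $\sigma_d(H)>0$, which also follows from that bound, so that $\hat V_H$ is well defined.
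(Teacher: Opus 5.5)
Your proof is correct, and its outline matches the paper's. Weyl's inequality gives $\sigma_d(H)\ge\sigma_d(B)/2$. A residual-type $\sin\Theta$ bound then gives $\|\sin\Theta(\hat V_H,V_B)\|_F\le 2\min\{\sqrt d\,\|W\|_2,\|W\|_F\}/\sigma_d(B)$. Finally, the Procrustes (CS) step contributes the factor $\sqrt2$.

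The difference is in the middle step. The paper invokes Wedin's generalized $\sin\theta$ theorem \cite{wedin1972} as a black box, with $\omega=0$ and $\eta=\sigma_d(B)/2$. That requires bounding both residuals $R_{11}=-W\hat U_H$ and $R_{21}=-W^\top\hat V_H$ and taking their maximum. The paper then cites the CS decomposition of \cite{yu2015} for the $\sqrt2$.

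You instead derive the bound directly from the identity $V_{B,\perp}^\top\hat V_H=V_{B,\perp}^\top W\hat U\hat\Sigma_d^{-1}$. This is exactly what Wedin's argument reduces to when $\sigma_{d+1}(B)=0$. The term $V_{B,\perp}^\top B\hat U$, which couples the left and right singular subspaces in the general proof, vanishes. The two residual equations therefore decouple, and only $W\hat U$ matters. You also verify the Procrustes inequality by hand via $1-\cos\theta_i\le\sin^2\theta_i$.

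Your route is self-contained and shows that $R_{21}$ is superfluous here. It also makes clear that the hypothesis $\|W\|_2\le\sigma_d(B)/2$ enters only through the lower bound on $\sigma_d(H)$; your intermediate bound has $\sigma_d(H)$ in the denominator. The paper's citation-based route is shorter and extends to an approximately low-rank $B$ with $\sigma_{d+1}(B)>0$ just by changing the gap parameters ($\omega=\sigma_{d+1}(B)$). There the full Wedin machinery is genuinely needed.

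Your preliminary remark is also accurate: the Gram-matrix route through \cite{yu2015} would lose a factor $\|B\|_2/\sigma_d(B)$, and you were right to abandon it.
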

\subsection{Main Theorem}

We are now ready to state the main theorem of the paper. This theorem combines the three lemmas above to give a complete characterization of the estimation error of the subsampled eigenspace.
\begin{thm}\label{thm:main}
Let \(\alpha \in (0,1]\).
\begin{itemize}
    \item If \(\alpha = 1\), then deterministically there exists an orthogonal matrix \(\hat O \in \mathbb{R}^{d \times d}\) such that
    \[
    \|\hat V_H \hat O - V\|_F 
    \le \frac{2\sqrt{2}\,\min\{\sqrt{d}\,\|E\|_2,\|E\|_F\}}{\delta}.
    \]
    \item If \(0<\alpha<1\), suppose that
    \[
    \alpha \ge \max\left\{16\frac{\mu d\log p}{p},\; 8\left(\frac{\|E\|_2}{\delta}\right)^2\right\}.
    \]
    Then, with probability at least \(1 - p^{-2d} - d p^{-2}\), there exists an orthogonal matrix \(\hat O \in \mathbb{R}^{d \times d}\) such that
    \[
    \|\hat V_H \hat O - V\|_F 
    \le \frac{4\,\min\{\sqrt{d}\,\|E\|_2,\|E\|_F\}}{\delta \sqrt{\alpha}}.
    \]
\end{itemize}
\end{thm}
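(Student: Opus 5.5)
We split $H$ into signal and noise, $H = \Sigma S + ES = B + W$, and apply Lemma~\ref{lem:wedin} to this decomposition. Two facts are needed first. The first is that $V_B$ spans the same space as $V$. The second is a lower bound on $\sigma_d(B)$.

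Since $B = V\Lambda V^\top S$, the column space of $B$ lies in $\mathrm{span}(V)$. It equals that span once $V^\top S$ has rank $d$, because $\Lambda$ is invertible. In that case we may take $V_B = VQ$ for some orthogonal $Q\in\mathbb{R}^{d\times d}$. For the singular value we use the orthonormality of $V$:
\[
\sigma_d(B)=\sigma_d(\Lambda V^\top S)\ge \sigma_{\min}(\Lambda)\,\sigma_d(V^\top S)=\delta\,\sigma_d(V^\top S).
\]
For the noise we use $\|W\|_2=\|ES\|_2\le\|E\|_2\|S\|_2=\|E\|_2$ and $\|W\|_F\le\|E\|_F$. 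The Frobenius bound holds because $ES$ consists of a subset of the columns of $E$, and $S$ has orthonormal columns. It follows that $\min\{\sqrt d\|W\|_2,\|W\|_F\}\le\min\{\sqrt d\|E\|_2,\|E\|_F\}$.

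\emph{Case $\alpha=1$.} Here $S=I_p$ up to column ordering, so $B=\Sigma$ and $\sigma_d(B)=\delta$ (equivalently, Lemma~\ref{lem:subspace} gives $\sigma_d(V^\top S)=1$). Lemma~\ref{lem:wedin} requires $\|E\|_2\le\delta/2$. If this fails, the claimed bound exceeds $2\sqrt2\sqrt d\cdot\tfrac12\ge\sqrt{2d}$ whenever the minimum is attained by the spectral term. When the minimum is the Frobenius term, $\|E\|_F\ge\|E\|_2>\delta/2$, and the bound is at least $\sqrt2$. So we need a trivial bound: choosing $\hat O$ by the orthogonal Procrustes problem yields $\|\hat V_H\hat O-V\|_F\le\sqrt{2d}$ always. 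With $\sqrt2$ instead, a finer argument is needed, namely $\|\hat V_H\hat O - V\|_F^2 = 2d - 2\|V^\top\hat V_H\|_*$. Handling this edge case is the main technical nuisance, and we expect it to go through by combining this identity with the $\sin\Theta$ form of the bound. When $\|E\|_2\le\delta/2$, Lemma~\ref{lem:wedin} gives the result directly, since $V_B=VQ$ and we can absorb $Q$ by replacing $\hat O$ with $\hat O Q^\top$.

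\emph{Case $0<\alpha<1$.} Because $\mu\ge1$ (indeed $\|V\|_{2,\infty}^2\ge d/p$), the hypothesis implies condition \eqref{eq:alpha_condition_n}. Lemma~\ref{lem:n_lower_bound} then gives $n\ge d$, and Lemma~\ref{lem:subspace} gives $\sigma_d(V^\top S)\ge\sqrt{\alpha/2}$. A union bound shows both events hold with probability at least $1-p^{-2d}-dp^{-2}$. On this event $\sigma_d(B)\ge\delta\sqrt{\alpha/2}>0$, so $V^\top S$ has rank $d$ and $V_B=VQ$. The condition $\alpha\ge 8(\|E\|_2/\delta)^2$ is equivalent to $\|E\|_2\le\delta\sqrt{\alpha/8}=\tfrac12\,\delta\sqrt{\alpha/2}\le\tfrac12\sigma_d(B)$, so Lemma~\ref{lem:wedin} applies. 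It yields
\[
\|\hat V_H\hat O-VQ\|_F\le\frac{2\sqrt2\min\{\sqrt d\|E\|_2,\|E\|_F\}}{\delta\sqrt{\alpha/2}}=\frac{4\min\{\sqrt d\|E\|_2,\|E\|_F\}}{\delta\sqrt\alpha},
\]
and the rotation $Q$ is absorbed into $\hat O$ as in the first case.
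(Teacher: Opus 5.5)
Your argument for $0<\alpha<1$ is correct and is essentially the paper's: Lemmas~\ref{lem:n_lower_bound} and~\ref{lem:subspace} with a union bound, then $\sigma_d(B)\ge\delta\,\sigma_d(V^\top S)$ and $\operatorname{span}(B)=\operatorname{span}(V)$, then Lemma~\ref{lem:wedin} with $W=ES$, absorbing the rotation at the end.

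The gap is in the case $\alpha=1$. That case is claimed deterministically, with no smallness condition on $\|E\|_2$. Routing it through Lemma~\ref{lem:wedin} forces you to assume $\|E\|_2\le\delta/2$. Your fallback for $\|E\|_2>\delta/2$ only covers the subcase where the minimum is $\sqrt d\,\|E\|_2$.

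In the subcase $\|E\|_F<\sqrt d\,\|E\|_2$, which can occur whenever $d\ge 2$, you only know that the target $2\sqrt2\|E\|_F/\delta$ exceeds $\sqrt2$. The Procrustes error, however, can be as large as $\sqrt{2d}$. For example, a rank-one $E$ with $\|E\|_2=\|E\|_F=0.6\,\delta$ and $d=100$ requires an error below about $1.7$, far from the trivial $\sqrt{200}$.

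Neither tool you mention fills this. The identity $\|\hat V_H\hat O-V\|_F^2=2d-2\|V^\top\hat V_H\|_*$ only converts a $\sin\Theta$ bound into a Procrustes bound (the $\sqrt2$ CS step). It does not produce the needed $\|\sin\Theta(\hat V_H,V)\|_F\le 2\|E\|_F/\delta$. Wedin's theorem cannot supply it either. The only gap it can use here is $\sigma_d(\hat\Sigma)-\sigma_{d+1}(\Sigma)\ge\delta-\|E\|_2$, which is below $\delta/2$ in this regime and vacuous once $\|E\|_2\ge\delta$. So ``we expect it to go through'' covers the only nontrivial part of this case.

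The missing ingredient is a Davis--Kahan bound whose gap involves only population eigenvalues. This is why the paper handles $\alpha=1$ by invoking \cite[Theorem~2]{yu2015} directly rather than Lemma~\ref{lem:wedin}. You can also prove it in a few lines:
\begin{itemize}
\item Let $\hat\Lambda$ hold the $d$ eigenvalues of $\hat\Sigma$ of largest modulus, so $\hat\Sigma\hat V_H=\hat V_H\hat\Lambda$.
\item Let $V_\perp$ be an orthonormal basis of $\operatorname{span}(V)^\perp$. Then $V_\perp^\top\Sigma=0$, hence $V_\perp^\top\hat V_H\hat\Lambda=V_\perp^\top E\hat V_H$.
\item Put $\tilde\lambda_j=\operatorname{sign}(\hat\lambda_j)\max\{|\hat\lambda_j|,\delta\}$, with $\operatorname{sign}(0):=1$. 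Since $|\tilde\lambda_j|\ge\delta$,
\[
\delta\,\|V_\perp^\top\hat V_H\|_F\le\|V_\perp^\top\hat V_H\tilde\Lambda\|_F\le\|E\hat V_H\|_F+\|\tilde\Lambda-\hat\Lambda\|_F\le 2\min\{\sqrt d\,\|E\|_2,\|E\|_F\}.
\]
\item The last inequality holds because, by Weyl's and Mirsky's inequalities for singular values,
\[
\|\tilde\Lambda-\hat\Lambda\|_F^2=\sum_{j\le d}(\delta-\sigma_j(\hat\Sigma))_+^2\le\sum_{j\le d}(\sigma_j(\Sigma)-\sigma_j(\hat\Sigma))^2\le\min\{d\|E\|_2^2,\|E\|_F^2\}.
\]
\end{itemize}
The CS step then gives the $2\sqrt2$ bound with no condition on $\|E\|_2$. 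This direct version also covers indefinite $\Sigma$ and the fact that $\hat V_H$ is built from eigenvalues of largest modulus. The contiguous-block statement of \cite{yu2015} does not literally cover either of these.
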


The theorem quantifies the subsampling trade-off in explicit terms. When \(\alpha=1\), the result recovers the classical Davis-Kahan bound of \cite{yu2015} without any additional assumptions on the noise level. For the subsampling regime \(0<\alpha<1\), the error is inflated by \(1/\sqrt{\alpha}\) while the expected computational cost is reduced by \(\alpha\). The condition \(16\mu d\log p/p\) ensures the sampled columns faithfully capture the target subspace, while \(8(\|E\|_2/\delta)^2\) prevents the noise from overwhelming the signal in the reduced sample. These two requirements together determine the minimal sampling probability for reliable recovery. The message is straightforward: randomization saves computation at a quantified statistical cost, and using more columns improves accuracy at a quantified rate.
\begin{rem}(On the feasibility of rank.)
Combining the condition \(\alpha\ge \max\left\{16\frac{\mu d\log p}{p},\; 8\left(\frac{\|E\|_2}{\delta}\right)^2\right\}\) in Theorem~\ref{thm:main} with \(\alpha<1\) immediately implies
\begin{align}
\mu d <\frac{p}{16\log p}. \label{eq:rank_mu}
\end{align}
This condition is very mild. When \(\mu = O(1)\), Equation \eqref{eq:rank_mu} allows \(d\) to grow almost linearly with \(p\), up to a logarithmic factor. Most practical low-rank settings---fixed \(d\), slowly growing \(d\), and many sublinear regimes---easily satisfy it. The condition only fails in the regime where \(d\) is nearly as large as \(p\), which is rarely of interest in high-dimensional statistics. If \(\mu\) grows with \(p\), the effective bound becomes \(O(p/(\mu\log p))\), which is tighter and remains sufficient in many practical settings.
\end{rem}

\begin{rem}(On the choice of the constant in the sampling condition.)
The specific value of the constant in the conditions of Lemmas~\ref{lem:n_lower_bound} and~\ref{lem:subspace} is not unique. To illustrate this, suppose we replace the condition in Lemma~\ref{lem:n_lower_bound} by the weaker requirement \(\alpha\ge 8d\log p/p\). Then the Chernoff bound yields the weaker probability guarantee \(\mathbb{P}(n\ge d)\ge 1-p^{-d}\) instead of \(1-p^{-2d}\). Likewise, if in Lemma~\ref{lem:subspace} we weaken the condition to \(\alpha\ge 8\mu d\log p/p\), the matrix Chernoff bound gives \(\mathbb{P}(\sigma_d(V^\top S)\ge \sqrt{\alpha/2})\ge 1-d/p\) in place of \(1-dp^{-2}\).

Consequently, under the milder sampling condition
\[
\alpha \ge \max\left\{8\frac{\mu d\log p}{p},\; 8\left(\frac{\|E\|_2}{\delta}\right)^2\right\},
\]
Theorem~\ref{thm:main} remains valid, with the probability bound in the subsampling regime becoming \(1-p^{-d}-dp^{-1}\).

This reveals a transparent trade-off: using the smaller constant 8 (instead of 16) allows a lower sampling rate \(\alpha\), but at the cost of a weaker probability guarantee. In this paper we choose the constant 16 to obtain the stronger probability bound \(1-p^{-2d}-dp^{-2}\), which decays faster with \(p\).  In general, the condition on the Bernoulli sampling probability \(\alpha\) in Theorem~\ref{thm:main} can be relaxed whenever one is willing to accept a slightly weaker probabilistic guarantee.
\end{rem}

\begin{rem}[On the cross-type comparison in Theorem~\ref{thm:main}]
Unlike \cite[Theorem~2]{yu2015}, which compares the eigenvectors of two symmetric matrices of the same dimension, Theorem~\ref{thm:main} involves the original symmetric matrix $\Sigma\in\mathbb{R}^{p\times p}$ and the rectangular subsampled matrix $H\in\mathbb{R}^{p\times n}$, and provides a direct bound between the eigenvectors of $\Sigma$ and the left singular vectors of $H$. This is made possible because, under the sampling condition and on a high-probability event, the column space of the subsampled signal matrix $B=\Sigma S$ coincides with the target eigenspace $\operatorname{span}(V)$; together with the noise bound $\|ES\|_2 \le \delta\sqrt{\alpha/8}$, this ensures that Wedin's perturbation theory \cite{wedin1972} applies and allows the error to be transferred from the subsampled matrix back to the original symmetric matrix.
\end{rem}
\section{Practical Implications}\label{sec:discussion}
Our theorem provides a unified, subsampled variant of the classical
Davis--Kahan theorem for symmetric matrices. As such, it can be
directly employed in any statistical procedure whose theoretical
analysis relies on controlling the perturbation of eigenspaces in the
Frobenius norm (with perturbations bounded in either the spectral or
Frobenius norm). To illustrate the broad applicability of this result,
it is helpful to categorize spectral problems into two classes according to their analysis structure, both of which are systematically studied in the monograph \cite{chen2021spectral}.

The first class consists of problems where the primary objective is
to estimate the underlying principal subspace. As analyzed in
Chapter~3 of \cite{chen2021spectral}, typical examples include
\emph{principal component analysis} under the spiked covariance
model, \emph{low-rank matrix denoising} with additive noise, and
\emph{tensor completion} after proper matricization and diagonal
deletion (which constructs a symmetric matrix whose leading
eigenspace encodes the desired tensor factors). The core analysis
in each case reduces to bounding the noise \(\|E\|\) and invoking
the Davis--Kahan theorem. Our Theorem~\ref{thm:main} therefore
applies directly, yielding a column-subsampled version of the
original guarantee.

The second class encompasses problems that follow a common
\emph{two-stage} structure: they first compute the leading
eigenvectors of a data-derived symmetric matrix to obtain a
low-dimensional subspace estimate, and then apply a task-specific
procedure---such as \(k\)-means clustering, entry-wise ratio-taking,
or normalization---to extract the quantities of interest. The
first stage is typically the computational bottleneck. As analyzed
in Chapter~3 of \cite{chen2021spectral}, this class includes
spectral clustering algorithms for Gaussian mixture models (using
the Gram matrix of the data), as well as many spectral methods
for community detection in network analysis. Representative examples include spectral clustering in SBM
~\cite{rohe2011,lei2015,joseph2016}, spectral clustering in 
DCSBM~\cite{qin2013,jin2015}, motif-based spectral clustering
~\cite{guo2024efficacy}, and spectral clustering or co-clustering for
multi-layer networks~\cite{lei2023,su2024}. A broader range of
such examples is surveyed in the Introduction. In all these cases,
the first-stage analysis---bounding \(\|E\|\), determining the
spectral gap \(\delta\), and invoking the Davis--Kahan theorem---
reduces to estimating the leading eigenspace (or its projection
matrix) of a symmetric matrix. Our Theorem~\ref{thm:main} replaces
the full eigenvector matrix \(\hat V\) with the subsampled estimator
\(\hat V_H\), reducing the complexity from \(O(dp^2)\) to
\(O(\alpha d p^2)\) while keeping the subspace error under control;
the downstream task then proceeds unchanged. The acceleration
factor is therefore \(1/\alpha\): a smaller sampling probability
yields faster computation at the cost of a larger error bound.

To illustrate concretely how the second stage can be analyzed and how the subspace error propagates to the final performance, we next work through a detailed application to the stochastic block model.
\subsection{Application to Stochastic Block Models}\label{sub:SBM}

We now demonstrate how our general bound applies to the stochastic block model (SBM), a standard setting for community detection. Let \(\widehat{\Sigma}\in\{0,1\}^{p\times p}\) be the observed symmetric adjacency matrix, and let its expectation be
\begin{align*}
\Sigma = \rho Z B_0 Z^\top,
\end{align*}
where:
\begin{itemize}
\item \(d\) is the number of communities, assumed to be \(O(1)\);
\item \(Z\in\{0,1\}^{p\times d}\) is the membership matrix, with \(Z_{ik}=1\) if node \(i\) belongs to community \(k\), and \(Z_{ik}=0\) otherwise;
\item \(B_0\in\mathbb{R}^{d\times d}\) is symmetric with entries in \([0,1]\);
\item \(\rho\in(0,1]\) controls the overall sparsity.
\end{itemize}

Let \(g_i\in\{1,2,\dots,d\}\) denote the community label of node \(i\), and let \(p_k = \sum_{i=1}^p Z_{ik}\) be the size of community \(k\). We assume balanced communities, i.e., \(p_k = O(p/d)\) for all \(k\), which with \(d=O(1)\) simply means each community has size \(O(p)\).

The subsampled estimator \(\hat V_H\) is obtained from the leading left singular vectors of \(H=\widehat{\Sigma}S\). To recover the community labels, we run \(k\)-means on the rows of \(\hat V_H\) with \(d\) clusters. The full procedure is described in Algorithm~\ref{alg:sbm}.

\begin{algorithm}
\caption{Column-Subsampled Spectral Clustering (CSSC) for SBM}
\label{alg:sbm}
\begin{algorithmic}[1]
\State \textbf{Input:} Adjacency matrix \(\widehat{\Sigma}\in\{0,1\}^{p\times p}\), number of communities \(d\), sampling probability \(\alpha\)
\State Draw \(\zeta_1,\zeta_2,\dots,\zeta_p\sim \mathrm{Bernoulli}(\alpha)\) independently and form the column selection matrix \(S\)
\State Compute \(H = \widehat{\Sigma}S\)
\State Compute the leading \(d\) left singular vectors \(\hat V_H\in\mathbb{R}^{p\times d}\) of \(H\)
\State Run \(k\)-means with \(d\) clusters on the rows of \(\hat V_H\), obtaining cluster labels \(\hat{g}\in\{1,2,\dots,d\}^p\)
\State \textbf{Return:} \(\hat{g}\)
\end{algorithmic}
\end{algorithm}

This is the subsampling counterpart of the standard spectral clustering algorithm analyzed in \cite{lei2015}, where the only difference is that we operate on the subsampled \(p\times n\) matrix \(H\) rather than the full \(p\times p\) symmetric matrix \(\widehat{\Sigma}\).

The following theorem establishes the misclustering rate of Algorithm~\ref{alg:sbm}.

\begin{thm}\label{thm:sbm}
Consider an SBM with \(d=O(1)\), balanced communities, and the smallest absolute eigenvalue of \(B_0\) bounded below by a constant. If
\begin{align*}
\alpha\ge C\frac{\log p}{\rho p}
\end{align*}
for a sufficiently large constant \(C\), then with high probability, we have
\begin{align*}
\frac{1}{p}\min_{\pi}\sum_{i=1}^p \mathbf{1}\{\hat{g}_i \neq \pi(g_i)\}
= O\!\left(\frac{\log p}{\rho\, p\, \alpha}\right),
\end{align*}
where \(\pi\) ranges over permutations of \(\{1,2,\dots,d\}\).
\end{thm}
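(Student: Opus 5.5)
We prove Theorem~\ref{thm:sbm} by verifying the hypotheses of Theorem~\ref{thm:main} for the SBM, and then propagating the subspace error through $k$-means in the manner of \cite{lei2015}.

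\textbf{Step 1: population quantities.} Write $\Delta=\diag(\sqrt{p_1},\dots,\sqrt{p_d})$. Then
\[
\Sigma=\rho (Z\Delta^{-1})(\Delta B_0\Delta)(Z\Delta^{-1})^\top ,
\]
and $Z\Delta^{-1}$ has orthonormal columns. Balance gives $p_k\asymp p$, and $|\lambda_{\min}(B_0)|\ge c$, so
\[
\delta\ge c'\rho p .
\]
Next, $V=Z\Delta^{-1}Q$ for some orthogonal $Q$. Hence the row of $V$ for node $i$ has norm $1/\sqrt{p_{g_i}}$. This gives $\|V\|_{2,\infty}^2\le C/p$, so $\mu=O(1)$. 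Moreover, rows of $V$ from distinct communities are at distance at least $\sqrt{1/p_k+1/p_l}\gtrsim p^{-1/2}$.

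\textbf{Step 2: noise bound.} Here $E=\widehat\Sigma-\Sigma$ has independent centered bounded entries with variance at most $\rho$. By the standard concentration result of \cite{lei2015}, valid when $\rho p\gtrsim\log p$, we have
\[
\|E\|_2\le C_1\sqrt{\rho p}
\]
with probability $1-p^{-r}$. This requirement on $\rho p$ is implied by the hypothesis $\alpha\ge C\log p/(\rho p)$ together with $\alpha\le1$. Consequently
\[
8(\|E\|_2/\delta)^2\le C_2/(\rho p)\le \alpha ,
\]
and likewise $16\mu d\log p/p=O(\log p/p)\le\alpha$, once $C$ is large and $\rho\le1$. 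Condition on this event; $E$ is independent of $S$, so the $S$-randomness in Theorem~\ref{thm:main} is still available. Applying Theorem~\ref{thm:main} (or its $\alpha=1$ case) with the $\sqrt d\|E\|_2$ branch yields
\[
\|\hat V_H\hat O-V\|_F\le \frac{4\sqrt d\, C_1\sqrt{\rho p}}{c'\rho p\sqrt\alpha}=O\!\left((\rho p\alpha)^{-1/2}\right),
\]
with high probability.

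\textbf{Step 3: $k$-means.} We use the standard argument of \cite[Lemma 5.3]{lei2015}, applied with exact or $(1+\epsilon)$-approximate $k$-means. The fitted centroid matrix $\hat X$ satisfies
\[
\|\hat X-\hat V_H\hat O\|_F\le \|V-\hat V_H\hat O\|_F ,
\]
because $V$ is itself a feasible $d$-cluster configuration. Therefore $\|\hat X-V\|_F\le 2\|\hat V_H\hat O-V\|_F$. Now call node $i$ "bad" if $\|\hat X_{i}-V_{i}\|\ge \tfrac12 \min_{k\neq l}\sqrt{1/p_k+1/p_l}\asymp p^{-1/2}$. The number of bad nodes is at most
\[
C p\,\|\hat V_H\hat O-V\|_F^2=O\!\left(\frac{1}{\rho\alpha}\right).
\]
Once $C$ is large, this is below the minimum community size. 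The counting argument of \cite{lei2015} then shows that all good nodes are correctly clustered up to a permutation $\pi$. The misclustering fraction is therefore $O(1/(\rho p\alpha))$, which is at most $O(\log p/(\rho p\alpha))$ and yields the claim.

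\textbf{Main obstacle.} The main difficulty lies in Step 2: obtaining the sharp $\sqrt{\rho p}$ bound on $\|E\|_2$ in a sparse regime, and confirming it under the stated sparsity. Because $E$ is independent of the sampling, the probabilities simply combine by a union bound, so the subtle part is only the spectral-norm concentration. If the sharp bound were unavailable, I would fall back on the matrix Bernstein inequality, which gives $\|E\|_2\lesssim\sqrt{\rho p\log p}$. That alone yields exactly the stated rate $O(\log p/(\rho p\alpha))$, provided the condition $8(\|E\|_2/\delta)^2\le\alpha$ is checked with this extra $\log p$ factor, which is precisely what $\alpha\ge C\log p/(\rho p)$ supplies. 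So the Bernstein route may in fact be the intended one.
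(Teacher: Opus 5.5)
Your proof is correct and has the same skeleton as the paper's. Both compute $\mu=O(1)$ and $\delta\asymp\rho p$ from the factorization $V=Z(Z^\top Z)^{-1/2}Q$, bound $\|E\|_2$, check the two sampling conditions of Theorem~\ref{thm:main}, and push the Frobenius error through Lemma~5.3 of \cite{lei2015}.

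The one genuine difference is the noise lemma. The paper cites Lemma~2.2 of \cite{jin2015}, $\|E\|_2=O(\sqrt{\rho p\log p})$. Then $8(\|E\|_2/\delta)^2\lesssim \log p/(\rho p)$ is exactly what the hypothesis $\alpha\ge C\log p/(\rho p)$ covers, and the rate $O(\log p/(\rho p\alpha))$ comes out exactly; this is your ``fallback'' route. You instead use the sharp bound $\|E\|_2=O(\sqrt{\rho p})$ from \cite{lei2015}, and you correctly note that its requirement $\rho p\gtrsim\log p$ follows from the hypothesis together with $\alpha\le 1$. This gives $\|\hat V_H\hat O-V\|_F=O((\rho p\alpha)^{-1/2})$ and a misclustering rate $O(1/(\rho p\alpha))$, better by a $\log p$ factor, which implies the stated bound.

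The paper's route is shorter and makes the hypothesis on $\alpha$ match the noise condition exactly. Yours shows that the $\log p$ in the rate is an artifact of the cruder concentration inequality. In your argument the noise condition alone needs only $\alpha\gtrsim 1/(\rho p)$; the $\log p$ in the hypothesis is used for the incoherence condition and to secure the sparsity level the sharp bound requires.

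You also make explicit two points the paper leaves implicit. First, the community-size condition in the second part of Lemma~5.3 of \cite{lei2015} holds, since the bad set has size $O(1/(\rho\alpha))=o(p)$ when $\rho p\alpha\ge C\log p$. Second, the sampling matrix $S$ is independent of $E$, so the high-probability events over $E$ and over $S$ combine by a union bound.
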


In the dense network regime where \(\rho\) is constant, the rate simplifies to \(O\!\left(\frac{\log p}{p\,\alpha}\right)\). To drive this error to zero as \(p\to\infty\), it suffices that \(\alpha \gg \log p / p\). For instance, taking
\begin{align*}
\alpha = \frac{\log^{1+\epsilon} p}{p} \qquad (\epsilon>0)
\end{align*}
yields a vanishing misclustering rate of \(O(\log^{-\epsilon} p)\). In terms of computational cost, Algorithm~\ref{alg:sbm} requires \(O(\alpha p^2)=O(p\log^{1+\epsilon}p)\) expected time, compared with \(O(p^2)\) for the full spectral method—a speed-up by a factor of \(p/\log^{1+\epsilon}p\). 

\section{Numerical Experiments}\label{sec:numexp}
This section empirically validates the computational-accuracy trade-off established in Theorems~\ref{thm:main} and~\ref{thm:sbm} via comparisons between the column-subsampled spectral clustering (CSSC, Algorithm~\ref{alg:sbm}) and the standard spectral clustering (SC, Algorithm~1 of \cite{lei2015}) under the stochastic block model described in Section~\ref{sub:SBM}. All experiments are performed in MATLAB R2024b on a personal computer (ThinkPad X1). To ensure a fair comparison, both algorithms use the same power iteration routine for subspace extraction, with identical termination criteria. Running time is measured exclusively for the subspace estimation phase: for SC, we record the time to compute the top-$d$ eigenvectors $\hat V$ of the full matrix $\hat\Sigma$; for CSSC, we record the time from constructing the subsampled matrix $H = \hat\Sigma S$ to computing its top-$d$ left singular vectors $\hat V_H$. The subsequent $k$-means post-processing is excluded from the timing and configured identically for both algorithms.

We consider $d=3$ balanced communities, where each node is assigned to one of the three communities with equal probability. The connectivity matrix is set as
\[
B_0 = \begin{pmatrix}
0.6 & 0.2 & 0.2\\
0.2 & 0.6 & 0.2\\
0.2 & 0.2 & 0.6
\end{pmatrix}.
\]

The misclustering error is computed as
\[
\mathrm{Err} = \frac{1}{p} \min_{\pi \in \mathcal{S}_3} \sum_{i=1}^p \mathbf{1}\{\hat{g}_i \neq \pi(g_i)\},
\]
where $g_i$ and $\hat{g}_i$ denote the true and estimated community labels, respectively.

Two numerical experiments are conducted to empirically validate the theoretical trade-off between computational efficiency and estimation accuracy. For each parameter configuration, we report the average misclustering rate and the average running time (based on 100 independent repetitions) for both CSSC and SC.

\emph{Experiment 1: Increasing network size.} To examine the scalability of the algorithms, we fix the sparsity parameter at $\rho = 0.5$, which corresponds to a relatively dense regime where the signal strength is strong and the computational advantage of column subsampling is expected to be pronounced. The sampling probability is set to $\alpha(p) = 20(\log p)^{1.2}/p$, which satisfies the sufficient condition of Theorem~\ref{thm:sbm} with a large margin, since
\[
\frac{\alpha}{\log p / (\rho p)} = 20\rho (\log p)^{0.2} \ge 10(\log 5000)^{0.2}> 6 \qquad \text{for all } p \ge 5000.
\]
We vary $p$ from $5000$ to $50000$ in increments of $5000$, generating $100$ independent SBM adjacency matrices $\widehat{\Sigma}\in\{0,1\}^{p\times p}$ for each value of $p$.

\begin{figure}[!htbp]
\centering
\resizebox{\columnwidth}{!}{
{\includegraphics[width=2\textwidth]{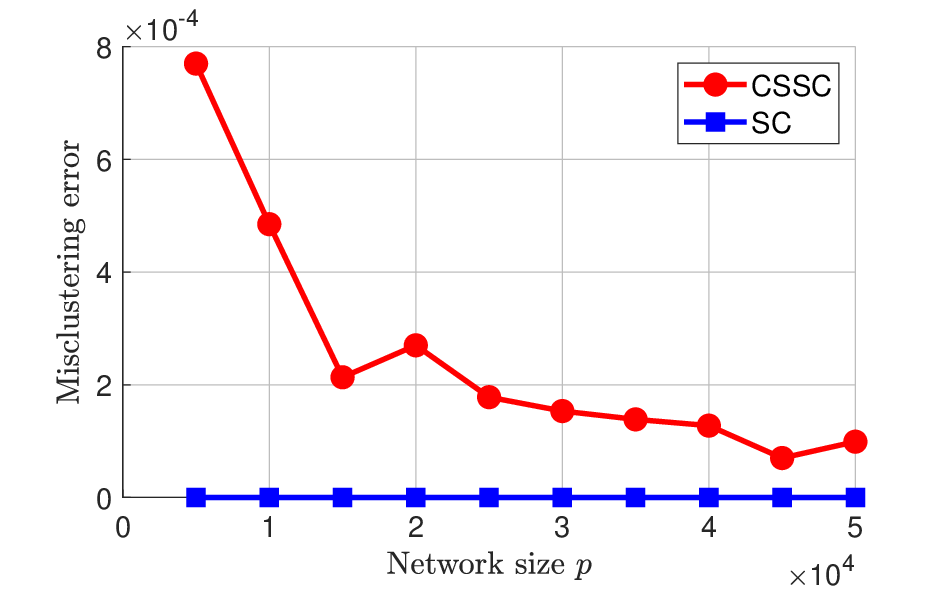}}
{\includegraphics[width=2\textwidth]{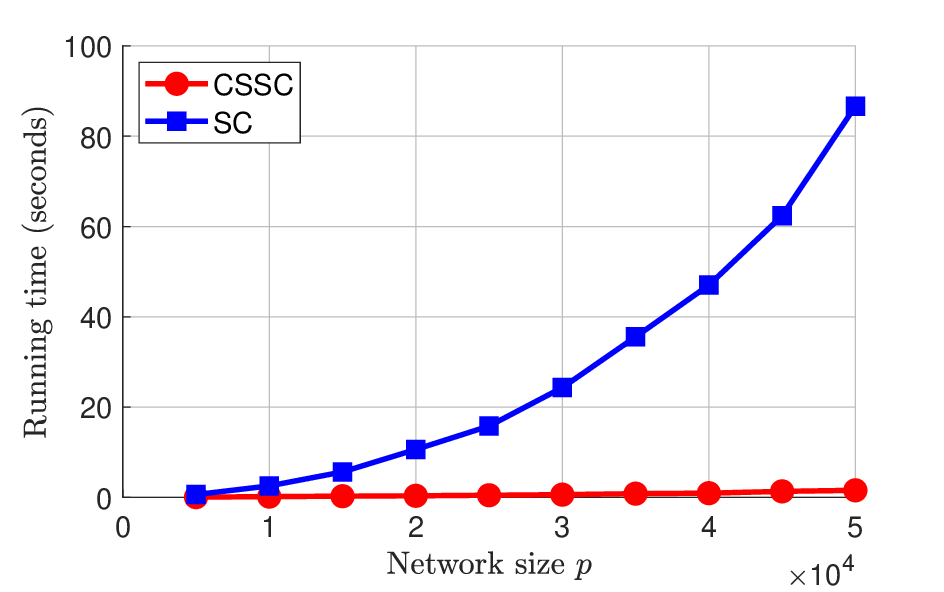}}
}
\caption{Numerical results of Experiment 1.}
\label{fig:ex1} 
\end{figure}

Figure~\ref{fig:ex1} reports the numerical results of Experiment 1. The left panel shows that SC achieves exactly zero misclustering error for all network sizes, while CSSC attains an error that vanishes as $p$ increases and remains very small throughout. This indicates that subsampling maintains a high level of accuracy in the dense regime. The right panel displays the running time. The time for SC increases quickly as $p$ grows, whereas the time for CSSC rises only gradually. Notably, even at the largest network size $p=50000$, the subspace estimation time for CSSC remains well below 2 seconds, demonstrating its remarkable efficiency. As a result, the gap between the two curves becomes wider for larger $p$, meaning that the speed-up factor of CSSC over SC grows with the network size. These observations are consistent with the theoretical trade-off in Theorem~\ref{thm:main}: subsampling yields substantial computational gains with only a negligible, asymptotically vanishing loss in estimation precision.

\emph{Experiment 2: Increasing sparsity.} To isolate the effect of signal strength on the performance of CSSC, we fix the network size at $p = 50000$ and let the sparsity parameter $\rho$ increase from $0.1$ to $1.0$ in steps of $0.1$, while keeping the sampling probability fixed at $\alpha = 20(\log 50000)^{1.2}/50000$. For $\rho \ge 0.2$, the theoretical condition is comfortably met because
\[
\frac{\alpha}{\log p / (\rho p)} = 20\rho (\log 50000)^{0.2} \ge 20 \times 0.2 \times (\log 50000)^{0.2} > 6.
\]
The case $\rho = 0.1$ lies slightly below the theoretical threshold and is included to investigate the empirical behavior of the method in the sparse regime.

\begin{figure}[!htbp]
\centering
\resizebox{\columnwidth}{!}{
{\includegraphics[width=2\textwidth]{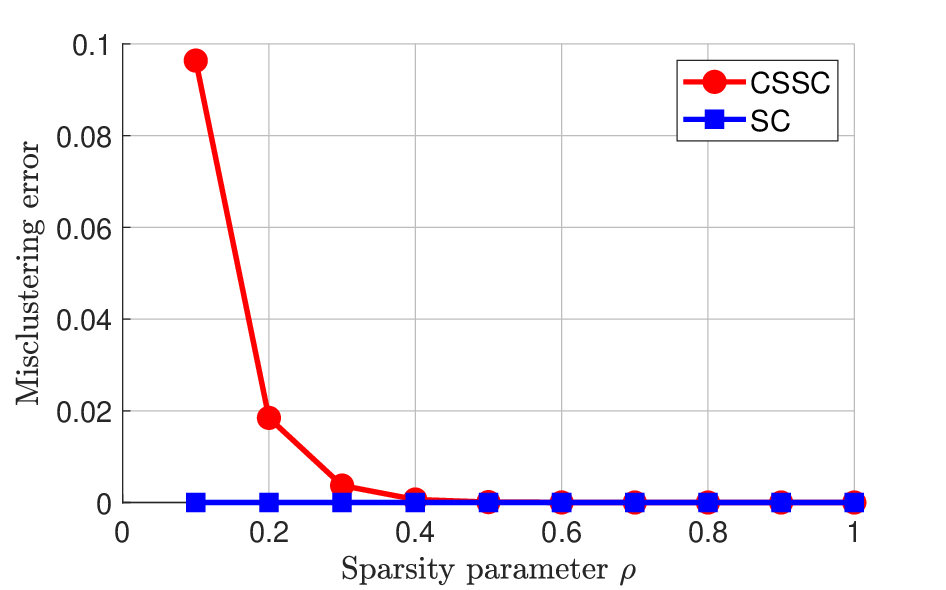}}
{\includegraphics[width=2\textwidth]{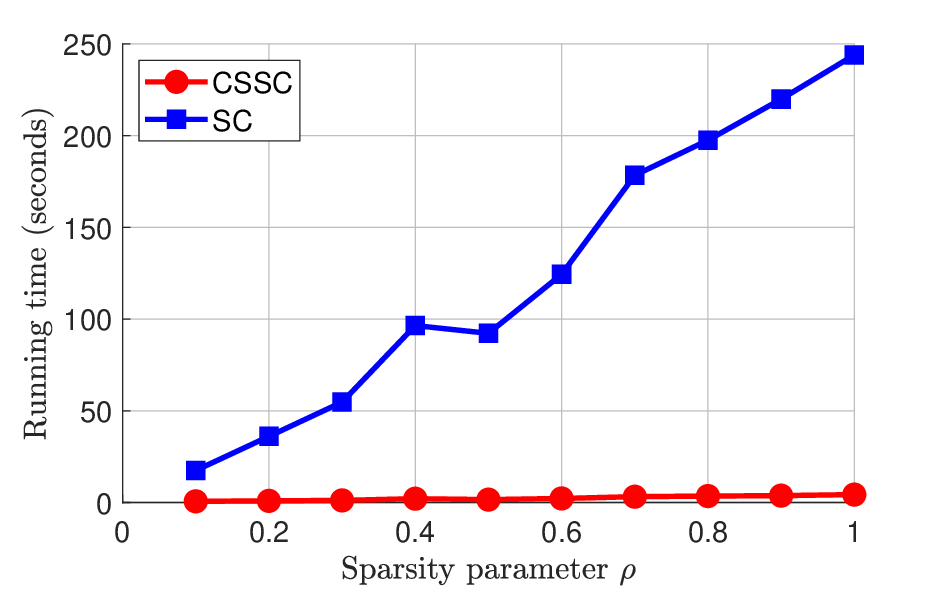}}
}
\caption{Numerical results of Experiment 2.}
\label{fig:ex2} 
\end{figure}

Figure~\ref{fig:ex2} presents the numerical results of Experiment 2. The left panel shows the misclustering error. SC attains exactly zero error throughout. CSSC produces a noticeable error only when $\rho$ is very small; as $\rho$ increases, the error drops quickly and becomes negligible for $\rho \ge 0.2$, reaching zero around $\rho = 0.5$. This pattern reflects the natural signal-strength effect: a larger $\rho$ yields a stronger signal, making the subspace easier to recover under column subsampling. The right panel displays the running time. Both methods become slower as $\rho$ grows, since a denser adjacency matrix contains more nonzeros, which increases the cost of matrix operations. However, the increase for SC is dramatic, with the time approaching 250 seconds when $\rho=1$ (the densest case considered), while CSSC remains below 5 seconds for all $\rho$ values. The difference between the two curves becomes more pronounced as $\rho$ increases, implying that the speed-up factor of CSSC over SC is actually larger in denser networks. This is because the computational burden of SC grows directly with the number of edges, whereas CSSC operates on a much smaller matrix and is therefore far less affected by edge density.
\section{Conclusion}\label{sec:conclusion}
This paper establishes a subsampled Davis--Kahan theorem for
estimating the leading eigenspace of a large symmetric matrix
from a randomly selected subset of its columns. Unlike the setting of \cite[Theorem~2]{yu2015}, which compares the eigenvectors of two symmetric matrices, our result provides a direct bound between the eigenvectors of the original symmetric matrix and the left singular vectors of the rectangular subsampled matrix. The proposed theorem
provides a quantitative characterization of the trade-off between
computational cost and statistical accuracy in large-scale
spectral analysis. Subsampling with probability \(\alpha\) inflates
the Frobenius-norm subspace estimation error by \(1/\sqrt{\alpha}\)
and reduces the expected computational cost by \(\alpha\). This
relationship holds uniformly for any symmetric low-rank signal,
subject to an explicit condition on the sampling probability
stated in the theorem. The framework is broadly applicable to spectral methods that involve the extraction of a leading eigenspace from a symmetric data matrix. We illustrate the practical utility of this framework through a detailed analysis of the stochastic block model, where the subsampled spectral clustering algorithm achieves a vanishing misclustering rate at a substantially reduced computational cost compared with the full spectral method. Numerical experiments support these theoretical findings.


\appendix

\section{Technical Proofs}\label{sec:proofs}
\subsection{Proof of Lemma~\ref{lem:n_lower_bound}}
\begin{proof}
Since $n \sim \mathrm{Binomial}(p, \alpha)$, the standard Chernoff lower-tail bound gives for any $\tau \in (0,1)$,
\begin{align*}
\mathbb{P}\bigl(n \le (1-\tau) p \alpha\bigr) \le \exp\!\left(-\frac{\tau^2}{2} p \alpha\right).
\end{align*}

From Equation \eqref{eq:alpha_condition_n}, we have $p \alpha\ge 16 d \log p$. For $p \ge 2$, $16\log p \ge 16\log 2 > 2$, hence $p\alpha > 2d$. Define
\begin{align*}
\tau := 1 - \frac{d}{p \alpha}.
\end{align*}

Since $d \ge 1$ and $p \alpha > 2d$, we have $0 < d/(p \alpha) < 1/2$, so $\tau \in (1/2, 1) \subset (0,1)$. Thus the Chernoff bound applies. Observe that $\{n < d\} \subseteq \{n \le (1-\tau) p \alpha\}$, because $(1-\tau) p \alpha = d$. Therefore, we have
\begin{align*}
\mathbb{P}(n < d)
\le \mathbb{P}\!\left(n \le (1-\tau) p \alpha\right)\le \exp\!\left(-\frac{\tau^2}{2} p \alpha\right)\le \exp\!\left(-\frac{p \alpha}{8}\right),
\end{align*}
where the last inequality uses $\tau > 1/2$.

Using Equation \eqref{eq:alpha_condition_n} again, we get
\begin{align*}
\exp\!\left(-\frac{p \alpha}{8}\right)
&\le \exp\!\left(-2 d \log p\right)
= p^{-2d}.
\end{align*}

Thus, we get
\begin{align*}
\mathbb{P}(n \ge d) \ge 1 - p^{-2d},
\end{align*}
which completes the proof.
\end{proof}
\subsection{Proof of Lemma~\ref{lem:subspace}}
\begin{proof}
If $\alpha=1$, then $S=I_p$ deterministically, so $\sigma_d(V^\top S)=1$. Now assume $0<\alpha<1$. Let $e_i \in \mathbb{R}^p$ denote the $i$-th standard basis vector and set $u_i = V^\top e_i \in \mathbb{R}^d$ for $i=1,2,\dots,p$. Since $V^\top V = I_d$, we have
\begin{align*}
\sum_{i=1}^p u_i u_i^\top = I_d.
\end{align*}

Define the $d\times d$ random matrix
\begin{align*}
Y := V^\top S S^\top V = \sum_{i=1}^p \zeta_i u_i u_i^\top, 
\end{align*}
where $\zeta_i \sim \mathrm{Bernoulli}(\alpha)$ are independent indicators. The eigenvalues of $Y$ coincide with the squared singular values of $V^\top S$; in particular, $\lambda_{\min}(Y) = \sigma_d(V^\top S)^2$ (with the convention $\sigma_d(V^\top S)=0$ if $n<d$). 

We now apply the matrix Chernoff lower-tail inequality to the independent summands \(X_i := \zeta_i u_i u_i^\top\). To justify the application, we first verify that the assumptions of \cite[Corollary~5.2]{tropp2012} are satisfied.

Since \(\zeta_i \ge 0\) and \(u_i u_i^\top\) is a rank-one positive semidefinite matrix, each summand is self-adjoint and satisfies
\[
X_i = \zeta_i u_i u_i^\top \succeq \mathbf{0},
\]
where \(\mathbf{0}\) denotes the \(d\times d\) zero matrix. Moreover, because \(\zeta_i \le 1\), we have the uniform eigenvalue bound
\[
\lambda_{\max}(X_i) = \|X_i\|_2 = \zeta_i \|u_i\|_2^2 \le \|u_i\|_F^2 \le \max_{1\le j\le p}\|u_j\|_F^2 = \|V\|_{2,\infty}^2 = \frac{\mu d}{p} =: R,
\]
which holds for every realization.

Furthermore, the expectation of the sum \(Y := \sum_{i=1}^p X_i\) is
\[
\mathbb{E}[Y] = \sum_{i=1}^p \mathbb{E}[X_i] = \sum_{i=1}^p \alpha u_i u_i^\top = \alpha I_d,
\]
so \(\mu_{\min} := \lambda_{\min}(\mathbb{E}[Y]) = \alpha\).

Since all conditions of the matrix Chernoff inequality are met, we invoke the simplified lower-tail bound from \cite[Remark~5.3]{tropp2012}, which states that for any \(t \in [0,1]\),
\[
\mathbb{P}\left(\lambda_{\min}(Y) \le t\, \mu_{\min}\right)
\le d \exp\left(-\frac{(1-t)^2 \mu_{\min}}{2R}\right).
\]

Setting \(t = 1/2\) and substituting \(\mu_{\min} = \alpha\) and \(R = \mu d/p\), we obtain
\[
\mathbb{P}\left(\lambda_{\min}(Y) \le \frac{\alpha}{2}\right)
\le d \exp\left(-\frac{(1 - 1/2)^2 \alpha}{2(\mu d / p)}\right)
= d \exp\left(-\frac{\alpha p}{8\mu d}\right).
\]

Substituting condition \eqref{eq:alpha_subspace} gives
\begin{align*}
\frac{\alpha p}{8\mu d} \ge \frac{16(\mu d \log p / p)\cdot p}{8\mu d} = 2\log p.
\end{align*}

Therefore, we obtain
\begin{align*}
\mathbb{P}\left(\lambda_{\min}(Y) \le \frac{\alpha}{2}\right)
\le d \exp(-2\log p) = d p^{-2}.
\end{align*}

Hence, with probability at least $1 - d p^{-2}$, we have
\begin{align*}
\lambda_{\min}(Y) \ge \frac{\alpha}{2}. 
\end{align*}

On this event, \(\lambda_{\min}(Y) > 0\), so \(Y\) is positive definite and therefore has full rank \(d\). Since
\(Y = V^\top S S^\top V = (V^\top S)(V^\top S)^\top\), we have \(\operatorname{rank}(Y) = \operatorname{rank}(V^\top S)\). Hence \(\operatorname{rank}(V^\top S) = d\). Because \(V^\top S \in \mathbb{R}^{d\times n}\), its rank cannot exceed its number of columns. Thus, we must have \(n \ge d\). Moreover, from the identity  $\lambda_{\min}(Y) = \sigma_d(V^\top S)^2$, we have
\begin{align*}
\sigma_d(V^\top S) \ge \sqrt{\frac{\alpha}{2}}.
\end{align*}
\end{proof}
\subsection{Proof of Lemma~\ref{lem:wedin}}
\begin{proof}
Since $\operatorname{rank}(B)=d$, we have $\sigma_{d+1}(B)=0$. Let 
\[
\Delta := \frac12 \sigma_d(B).
\]

By Weyl's inequality for singular values, we have
\[
\sigma_d(H) \ge \sigma_d(B) - \|W\|_2 \ge \sigma_d(B) - \frac12\sigma_d(B) = \Delta > 0.
\]

Thus, applying Wedin's generalized $\sin\theta$ theorem \cite[Section 3]{wedin1972} with the local parameters $\omega := 0$ and $\eta := \Delta$, we have
\[
\sigma_d(H) \ge \omega + \eta,\qquad \sigma_{d+1}(B)=0 \le \omega,
\]
which exactly matches the required gap condition. Hence Wedin's  generalized $\sin\theta$ theorem is applicable.

Let \(\hat V_H \hat{\Lambda}_H \hat U_H^\top\) be the compact SVD of \(H\) for its \(d\) leading singular values, where \(\hat{\Lambda}_H \in \mathbb{R}^{d\times d}\) is the diagonal matrix of the \(d\) nonzero singular values of \(H\) (hence positive definite), and \(\hat V_H \in \mathbb{R}^{p\times d}\), \(\hat U_H \in \mathbb{R}^{n\times d}\) are the matrices of left and right singular vectors, respectively, with orthonormal columns. Define the residuals
\[
R_{11} := B\hat U_H - \hat V_H \hat{\Lambda}_H,\qquad
R_{21} := B^\top \hat V_H - \hat U_H \hat{\Lambda}_H.
\]

Since \(B = H - W\), using the SVD relations \(H\hat U_H = \hat V_H \hat{\Lambda}_H\) and \(H^\top \hat V_H = \hat U_H \hat{\Lambda}_H\), we get
\[
R_{11} = -W\hat U_H,\qquad R_{21} = -W^\top \hat V_H.
\]

Because \(\hat U_H\) and \(\hat V_H\) have orthonormal columns, we have
\[
\|R_{11}\|_F \le \min\{\sqrt d\,\|W\|_2,\|W\|_F\},\qquad
\|R_{21}\|_F \le \min\{\sqrt d\,\|W\|_2,\|W\|_F\},
\]
which gives
\[
\max\{\|R_{11}\|_F,\|R_{21}\|_F\}
\le \min\{\sqrt d\,\|W\|_2,\|W\|_F\}.
\]

Here and throughout, let $\Theta(\hat V_H, V_B)$ denote the diagonal matrix of principal angles between the column spaces of $\hat V_H$ and $V_B$; by the standard identity for subspaces, $\|\sin\Theta(\hat V_H, V_B)\|_F = \|(I - V_B V_B^\top)\hat V_H\|_F$. With the residual bound established above, we now apply Wedin's generalized $\sin\theta$ theorem \cite[Section 3]{wedin1972}, which holds for any unitarily invariant norm. Taking the Frobenius norm in Wedin's theorem gives
\[
\|\sin\Theta(\hat V_H, V_B)\|_F
\le \frac{\max\{\|R_{11}\|_F,\|R_{21}\|_F\}}{\Delta}.
\]

Substituting the upper bound on the residuals (and recalling $\Delta = \frac12\sigma_d(B)$) yields
\[
\|\sin\Theta(\hat V_H, V_B)\|_F
\le \frac{\min\{\sqrt d\,\|W\|_2,\|W\|_F\}}{\frac12\sigma_d(B)}
= \frac{2\min\{\sqrt d\,\|W\|_2,\|W\|_F\}}{\sigma_d(B)}.
\]

Finally, since both $V_B$ and $\hat V_H$ have orthonormal columns spanning $d$-dimensional subspaces, the cosine-sine (CS) decomposition (see \cite[Equation (A5)]{yu2015}) guarantees the existence of an orthogonal matrix $\hat O \in \mathbb{R}^{d\times d}$ such that
\[
\|\hat V_H \hat O - V_B\|_F \le \sqrt{2}\,\|\sin\Theta(\hat V_H, V_B)\|_F.
\]

Combining the inequalities gives
\[
\|\hat V_H \hat O - V_B\|_F
\le \sqrt{2} \cdot \frac{2\min\{\sqrt d\,\|W\|_2,\|W\|_F\}}{\sigma_d(B)}
= \frac{2\sqrt{2}\,\min\{\sqrt d\,\|W\|_2,\|W\|_F\}}{\sigma_d(B)},
\]
which is exactly the claimed bound.
\end{proof}
\subsection{Proof of Theorem~\ref{thm:main}}
\begin{proof}
We treat the cases \(\alpha=1\) and \(0<\alpha<1\) separately.

\medskip
\noindent\textbf{Case 1: \(\alpha=1\).}  
Then \(S=I_p\) deterministically, so \(H=\hat{\Sigma}\). Applying the classical Davis-Kahan theorem for symmetric matrices (see \cite[Theorem 2, Equation (3)]{yu2015}) directly to the pair \((\Sigma, \hat{\Sigma})\) yields the desired bound with constant \(2\sqrt{2}\). Crucially, this classical result requires no upper bound on \(\|E\|_2\) other than the inherent assumption \(\delta>0\), and no probabilistic argument is needed in this degenerate case.

\medskip
\noindent\textbf{Case 2: \(0<\alpha<1\).}  
We prove the bound with probability at least \(1-2p^{-1}\). All constants below are absolute.

\medskip
\noindent\textbf{Step 0: Sample size control.}  
Let \(n = \sum_{i=1}^p \zeta_i\), where \(\zeta_i \sim \mathrm{Bernoulli}(\alpha)\) independently. Note that \(\mu \ge 1\) always holds: indeed, since \(V^\top V = I_d\), we have
\[
\sum_{i=1}^p \|V_{i,:}\|_F^2 = \|V\|_F^2 = d,
\]
so the maximum row squared norm is at least the average, \(\|V\|_{2,\infty}^2 \ge d/p\), which gives \(\mu = (p/d)\|V\|_{2,\infty}^2 \ge 1\). Hence the condition \(\alpha \ge 16\mu d\log p/p\) implies \(\alpha \ge 16d\log p/p\). Applying Lemma~\ref{lem:n_lower_bound} with this condition yields
\[
\mathbb{P}(n \ge d) \ge 1 - p^{-2d}.
\]

Let \(\mathcal E_0 := \{n\ge d\}\); then \(\mathbb{P}(\mathcal E_0) \ge 1 - p^{-2d}\). On this event, the leading \(d\) left singular vectors of \(H\) are well-defined.

\medskip
\noindent\textbf{Step 1: Subspace preservation.}  
Lemma~\ref{lem:subspace} applies and yields, with probability at least \(1 - d p^{-2}\), 
\begin{align*}
\sigma_d(V^\top S) \ge \sqrt{\frac{\alpha}{2}}.
\end{align*}

Let \(\mathcal E_1\) be this event; then \(\mathbb{P}(\mathcal E_1) \ge 1 - d p^{-2}\).

\medskip
\noindent\textbf{Step 2: Lower bound for \(B = \Sigma S\).}  
On the event \(\mathcal E_1\), Lemma~\ref{lem:subspace} ensures that the \(d\times n\) matrix \(V^\top S\) has full row rank \(d\). Since both \(V\) (orthonormal columns) and \(\Lambda\) (invertible diagonal matrix) have full column rank \(d\), the product
\[
B = \Sigma S = V\Lambda (V^\top S)
\]
also has rank \(d\). 

To obtain a quantitative lower bound on its smallest nonzero singular value, consider the \(n\times n\) Gram matrix
\[
B^\top B = S^\top \Sigma^2 S = (V^\top S)^\top \Lambda^2 (V^\top S).
\]

Since \(\delta = \min_i |\lambda_i|\), we have \(\Lambda^2 \succeq \delta^2 I_d\) in the Loewner order. Consequently, we have
\[
B^\top B \succeq \delta^2 (V^\top S)^\top (V^\top S).
\]

By Weyl's monotonicity theorem for eigenvalues of Hermitian matrices, taking the \(d\)-th largest eigenvalue on both sides yields
\[
\lambda_d(B^\top B) \ge \delta^2 \lambda_d((V^\top S)^\top (V^\top S)).
\]

Recognizing that \(\lambda_d(B^\top B) = \sigma_d(B)^2\) and \(\lambda_d((V^\top S)^\top (V^\top S)) = \sigma_d(V^\top S)^2\), and applying Lemma~\ref{lem:subspace} on the event \(\mathcal E_1\), we obtain
\begin{align}
\sigma_d(B) \ge \delta \,\sigma_d(V^\top S) \ge \delta \sqrt{\frac{\alpha}{2}}. \label{eq:sigmaB}
\end{align}

\medskip
\noindent\textbf{Step 3: Application of Lemma~\ref{lem:wedin}.}  
Let \(W = E S\). Since \(\|S\|_2=1\), we have \(\|W\|_2 \le \|E\|_2\) and \(\|W\|_F \le \|E\|_F\). The condition \(\alpha \ge 8(\|E\|_2/\delta)^2\) gives \(\|E\|_2 \le \delta\sqrt{\alpha/8}\). From Equation \eqref{eq:sigmaB}, we get
\[
\frac12\sigma_d(B) \ge \frac{\delta}{2}\sqrt{\frac{\alpha}{2}}\ge \|E\|_2,
\]
so \(\|W\|_2 \le \frac12\sigma_d(B)\). Thus Lemma~\ref{lem:wedin} applies and yields an orthogonal matrix \(\hat O_1 \in \mathbb{R}^{d\times d}\) such that
\begin{align*}
\|\hat V_H \hat O_1 - V_B\|_F
\le \frac{2\sqrt{2}\,\min\{\sqrt{d}\,\|W\|_2,\|W\|_F\}}{\sigma_d(B)}. 
\end{align*}

Substituting Equation \eqref{eq:sigmaB} and the bounds on \(\|W\|_2,\|W\|_F\) gives
\begin{align}
\|\hat V_H \hat O_1 - V_B\|_F
\le \frac{2\sqrt{2}\,\min\{\sqrt{d}\,\|E\|_2,\|E\|_F\}}{\delta\sqrt{\alpha/2}}
= \frac{4\min\{\sqrt{d}\,\|E\|_2,\|E\|_F\}}{\delta\sqrt{\alpha}}. \label{eq:step3_final}
\end{align}

\medskip
\noindent\textbf{Step 4: Alignment of \(V_B\) with \(V\).}  
On the event \(\mathcal E_1\), \(V^\top S\) has rank \(d\). Write
\[
B = \Sigma S = V\Lambda (V^\top S) = V M,
\]
where \(M = \Lambda (V^\top S) \in \mathbb{R}^{d\times n}\). Since \(\Lambda\) is invertible and \(V^\top S\) has rank \(d\), the matrix \(M\) also has rank \(d\) (i.e., full row rank). Therefore, its \(d\) rows form a basis of \(\mathbb{R}^d\). Consequently, for every vector \(a\in\mathbb{R}^d\), there exists a coefficient vector \(x\in\mathbb{R}^n\) such that
\begin{align}\label{eq:step4}
M x = a.
\end{align}

Now fix any column \(v_i = V e_i\) of \(V\), where \(e_i\in\mathbb{R}^d\) is the \(i\)-th standard basis vector. By Equation (\ref{eq:step4}) with \(a=e_i\), there exists \(x_i\in\mathbb{R}^n\) such that \(M x_i = e_i\). Then, we have
\[
B x_i = V M x_i = V e_i = v_i.
\]

Thus every column of \(V\) is a linear combination of the columns of \(B\), i.e., \(\operatorname{span}(V)\subseteq \operatorname{span}(B)\). The reverse inclusion follows immediately from \(B = V M\), since every column of \(B\) is a linear combination of the columns of \(V\). Hence, we obtain \(\operatorname{span}(B) = \operatorname{span}(V)\).

Since the leading \(d\) left singular vectors \(V_B\) of \(B\) form an orthonormal basis of \(\operatorname{span}(B)\), and \(V\) is an orthonormal basis of the same subspace, there exists an orthogonal matrix \(O_B\in\mathbb{R}^{d\times d}\) such that \(V = V_B O_B\). Setting \(\hat O = \hat O_1 O_B^\top\) and applying Equation \eqref{eq:step3_final} yields the desired bound.

Finally, by the union bound, we obtain
\[
\mathbb{P}(\mathcal E_0 \cap \mathcal E_1)
\ge 1 - p^{-2d} - d p^{-2},
\]
which completes the proof.
\end{proof}

\subsection{Proof of Theorem \ref{thm:sbm}}
\begin{proof}
We verify the conditions of Theorem~\ref{thm:main} under the SBM setting. Since \(d=O(1)\), all constants depending on \(d\) will be absorbed into the \(O(\cdot)\) notation.

First, for the eigenvector matrix \(V\in\mathbb{R}^{p\times d}\) of \(\Sigma\), since \(\Sigma = \rho Z B_0 Z^\top\), we can write \(V = Z (Z^\top Z)^{-1/2} X\) for some orthogonal matrix \(X\in\mathbb{R}^{d\times d}\). For node \(i\) in community \(k\), we have
\begin{align*}
\|V_{i,:}\|_F = \frac{1}{\sqrt{p_k}} = O\!\left(\frac{1}{\sqrt{p}}\right),
\end{align*}
so \(\|V\|_{2,\infty}^2 = O(1/p)\) and hence
\begin{align}
\mu = \frac{p}{d}\|V\|_{2,\infty}^2 = O(1). \label{eq:mu_sbm}
\end{align}

Second, the smallest absolute eigenvalue of \(\Sigma\) satisfies
\begin{align}
\delta = \rho \cdot \lambda_{\min}(|Z B_0 Z^\top|) \asymp \rho \cdot p \cdot \lambda_{\min}(|B_0|) \asymp \rho p, \label{eq:delta_sbm}
\end{align}
where the equivalence follows from balanced communities and the assumption on \(B_0\).

Third, by Lemma 2.2 of \cite{jin2015}, the spectral norm of the noise satisfies
\begin{align}
\|E\|_2 = \|\widehat{\Sigma}-\Sigma\|_2 = O(\sqrt{\rho p \log p}) \label{eq:eps_sbm}
\end{align}
with high probability.

Substituting Equations \eqref{eq:mu_sbm}, \eqref{eq:delta_sbm}, and \eqref{eq:eps_sbm} into the condition of Theorem~\ref{thm:main}, we obtain
\begin{align*}
\alpha \ge \max\left\{16 \frac{\mu d\log p}{p},\; 8 \frac{\|E\|_2^2}{\delta^2}\right\}
= \max\left\{O\!\left(\frac{\log p}{p}\right),\; O\!\left(\frac{\log p}{\rho p}\right)\right\},
\end{align*}
where the first term uses \(\mu d = O(1)\). Thus the assumed condition \(\alpha \ge C\log p/(\rho p)\) is sufficient for Theorem~\ref{thm:main} to apply.

Applying Theorem~\ref{thm:main} with \(\mu=O(1)\), \(\delta \asymp \rho p\), and \(\|E\|_2 = O(\sqrt{\rho p \log p})\), we get
\begin{align}
\|\hat V_H \hat O - V\|_F = O\!\left(\frac{\sqrt{\log p}}{\sqrt{\rho\, p\, \alpha}}\right) \label{eq:subspace_error_sbm}
\end{align}
for some orthogonal matrix \(\hat O\in\mathbb{R}^{d\times d}\).

Now we apply Lemma 5.3 of \cite{lei2015}. In the balanced setting with \(d=O(1)\), the minimum row separation of \(V\) is \(O(1/\sqrt{p})\), and their Lemma 5.3 yields a misclustering rate bounded by \(O(\|\hat V_H\hat O - V\|_F^2)\) (the factor \(1/d\) is absorbed since \(d=O(1)\)). Therefore, we obtain
\begin{align*}
\frac{1}{p}\min_{\pi}\sum_{i=1}^p \mathbf{1}\{\hat{g}_i \neq \pi(g_i)\}
= O(\|\hat V_H\hat O - V\|_F^2)
= O\!\left(\frac{\log p}{\rho\, p\, \alpha}\right),
\end{align*}
where the last equality uses Equation \eqref{eq:subspace_error_sbm}. 
\end{proof}

\bibliographystyle{elsarticle-num}
\bibliography{refsubDK}

@article{davis1970,
  title={The rotation of eigenvectors by a perturbation. III},
  author={Davis, Chandler and Kahan, William Morton},
  journal={SIAM Journal on Numerical Analysis},
  volume={7},
  number={1},
  pages={1--46},
  year={1970},
  publisher={SIAM}
}

@article{rohe2011,
	title="Spectral clustering and the high-dimensional stochastic blockmodel",
	author="Karl {Rohe} and Sourav {Chatterjee} and Bin {Yu}",
	journal="Annals of Statistics",
	volume="39",
	number="4",
	pages="1878--1915",
	year="2011"
}

@article{tropp2012,
	title="User-Friendly Tail Bounds for Sums of Random Matrices",
	author="Joel A. {Tropp}",
	journal="Foundations of Computational Mathematics",
	volume="12",
	number="4",
	pages="389--434",
	year="2012"
}

@article{wedin1972,
  title={Perturbation bounds in connection with singular value decomposition},
  author={Wedin, Per-{\AA}ke},
  journal={BIT Numerical Mathematics},
  volume={12},
  number={1},
  pages={99--111},
  year={1972},
  publisher={Springer}
}

@article{yu2015,
	title="A useful variant of the Davis--Kahan theorem for statisticians",
	author="Yi {Yu} and Tengyao {Wang} and Richard J. {Samworth}",
	journal="Biometrika",
	volume="102",
	number="2",
	pages="315--323",
	year="2015"
}

@article{lei2015,
	title="Consistency of spectral clustering in stochastic block models",
	author="Jing {Lei} and Alessandro {Rinaldo}",
	journal="Annals of Statistics",
	volume="43",
	number="1",
	pages="215--237",
	year="2015"
}

@article{joseph2016,
	title="Impact of regularization on spectral clustering",
	author="Antony {Joseph} and Bin {Yu}",
	journal="Annals of Statistics",
	volume="44",
	number="4",
	pages="1765--1791",
	year="2016"
}

@article{jin2015,
	title={{Fast community detection by SCORE}},
	author="Jiashun {Jin}",
	journal="Annals of Statistics",
	volume="43",
	number="1",
	pages="57--89",
	year="2015"
}

@inproceedings{qin2013,
	title={{Regularized spectral clustering under the degree-corrected stochastic blockmodel}},
	author="Tai {Qin} and Karl {Rohe}",
	booktitle="Advances in Neural Information Processing Systems 26",
	pages="3120--3128",
	year="2013"
}

@article{guo2024efficacy,
  title={On the efficacy of higher-order spectral clustering under weighted stochastic block models},
  author={Guo, Xiao and Zhang, Hai and Chang, Xiangyu},
  journal={Computational Statistics $\&$ Data Analysis},
  volume={190},
  pages={107872},
  year={2024},
  publisher={Elsevier}
}

@article{lei2023,
  title={Bias-adjusted spectral clustering in multi-layer stochastic block models},
  author={Lei, Jing and Lin, Kevin Z},
  journal={Journal of the American Statistical Association},
  volume={118},
  number={544},
  pages={2433--2445},
  year={2023},
  publisher={Taylor \& Francis}
}

@article{binkiewicz2017,
  title={Covariate-assisted spectral clustering},
  author={Binkiewicz, Norbert and Vogelstein, Joshua T and Rohe, Karl},
  journal={Biometrika},
  volume={104},
  number={2},
  pages={361--377},
  year={2017},
  publisher={Oxford University Press}
}

@article{chen2021spectral,
	title="Spectral Methods for Data Science: A Statistical Perspective",
	author="Yuxin {Chen} and Yuejie {Chi} and Jianqing {Fan} and Cong {Ma}",
	journal="Foundations and Trends{\textregistered} in Machine Learning",
	volume="14",
	number="5",
	pages="566--806",
	year="2021"
}

@article{deng2024,
  title={Subsampling spectral clustering for stochastic block models in large-scale networks},
  author={Deng, Jiayi and Huang, Danyang and Ding, Yi and Zhu, Yingqiu and Jing, Bingyi and Zhang, Bo},
  journal={Computational Statistics $\&$ Data Analysis},
  volume={189},
  pages={107835},
  year={2024},
  publisher={Elsevier}
}

@article{ding2024,
  title={Stock co-jump networks},
  author={Ding, Yi and Li, Yingying and Liu, Guoli and Zheng, Xinghua},
  journal={Journal of Econometrics},
  volume={239},
  number={2},
  pages={105420},
  year={2024},
  publisher={Elsevier}
}

@inproceedings{drineas2006,
  title={Sampling algorithms for l 2 regression and applications},
  author={Drineas, Petros and Mahoney, Michael W and Muthukrishnan, Shan},
  booktitle={Proceedings of the seventeenth annual ACM-SIAM symposium on Discrete algorithm},
  pages={1127--1136},
  year={2006}
}

@article{drineas2011,
  title={Faster least squares approximation},
  author={Drineas, Petros and Mahoney, Michael W and Muthukrishnan, Shan and Sarl{\'o}s, Tam{\'a}s},
  journal={Numerische Mathematik},
  volume={117},
  number={2},
  pages={219--249},
  year={2011},
  publisher={Springer}
}

@article{drineas2012,
  title={Fast approximation of matrix coherence and statistical leverage},
  author={Drineas, Petros and Magdon-Ismail, Malik and Mahoney, Michael W and Woodruff, David P},
  journal={Journal of Machine Learning Research},
  volume={13},
  number={1},
  pages={3475--3506},
  year={2012},
  publisher={JMLR. org}
}

@article{drineas2016,
  title={RandNLA: randomized numerical linear algebra},
  author={Drineas, Petros and Mahoney, Michael W},
  journal={Communications of the ACM},
  volume={59},
  number={6},
  pages={80--90},
  year={2016},
  publisher={ACM New York, NY, USA}
}

@article{halko2011,
  title={Finding structure with randomness: Probabilistic algorithms for constructing approximate matrix decompositions},
  author={Halko, Nathan and Martinsson, Per-Gunnar and Tropp, Joel A},
  journal={SIAM Review},
  volume={53},
  number={2},
  pages={217--288},
  year={2011},
  publisher={SIAM}
}

@article{mahoney2011,
  title={Randomized algorithms for matrices and data},
  author={Mahoney, Michael W},
  journal={Foundations and Trends{\textregistered} in Machine Learning},
  volume={3},
  number={2},
  pages={123--224},
  year={2011},
  publisher={Emerald Publishing Limited}
}

@article{martinsson2016,
  title={Randomized methods for matrix computations},
  author={Martinsson, Per-Gunnar},
  journal={arXiv preprint arXiv:1607.01649},
  year={2016}
}

@article{qing2025a,
  title={Community detection by spectral methods in multi-layer networks},
  author={Qing, Huan},
  journal={Applied Soft Computing},
  volume={171},
  pages={112769},
  year={2025},
  publisher={Elsevier}
}

@article{qing2025b,
  title={Community detection in multi-layer networks by regularized debiased spectral clustering},
  author={Qing, Huan},
  journal={Engineering Applications of Artificial Intelligence},
  volume={152},
  pages={110627},
  year={2025},
  publisher={Elsevier}
}

@article{su2024,
  title={Spectral co-clustering in multi-layer directed networks},
  author={Su, Wenqing and Guo, Xiao and Chang, Xiangyu and Yang, Ying},
  journal={Computational Statistics $\&$ Data Analysis},
  volume={198},
  pages={107987},
  year={2024},
  publisher={Elsevier}
}

@article{vonluxburg2007,
  title={A tutorial on spectral clustering},
  author={Von Luxburg, Ulrike},
  journal={Statistics and Computing},
  volume={17},
  number={4},
  pages={395--416},
  year={2007},
  publisher={Springer}
}

@article{witten2015,
  title={Randomized algorithms for low-rank matrix factorizations: sharp performance bounds},
  author={Witten, Rafi and Candes, Emmanuel},
  journal={Algorithmica},
  volume={72},
  number={1},
  pages={264--281},
  year={2015},
  publisher={Springer}
}

@article{zhang2022randomized,
  title={Randomized spectral clustering in large-scale stochastic block models},
  author={Zhang, Hai and Guo, Xiao and Chang, Xiangyu},
  journal={Journal of Computational and Graphical Statistics},
  volume={31},
  number={3},
  pages={887--906},
  year={2022},
  publisher={Taylor \& Francis}
}

@article{qi2025detection,
  title={Detection of model-based planted pseudo-cliques in random dot product graphs by the adjacency spectral embedding and the graph encoder embedding},
  author={Qi, Tong and Lyzinski, Vince},
  journal={IEEE Transactions on Pattern Analysis and Machine Intelligence},
  year={2025},
  publisher={IEEE}
}

@article{weylandt2026multivariate,
  title={Multivariate analysis for multiple network data via semi-symmetric tensor pca},
  author={Weylandt, Michael and Michailidis, George},
  journal={Journal of the American Statistical Association},
  pages={1--14},
  year={2026},
  publisher={Taylor \& Francis}
}

@article{srivastava2023robust,
  title={A robust spectral clustering algorithm for sub-Gaussian mixture models with outliers},
  author={Srivastava, Prateek R and Sarkar, Purnamrita and Hanasusanto, Grani A},
  journal={Operations Research},
  volume={71},
  number={1},
  pages={224--244},
  year={2023},
  publisher={INFORMS}
}

@article{xu2023covariate,
  title={Covariate-assisted community detection in multi-layer networks},
  author={Xu, Shirong and Zhen, Yaoming and Wang, Junhui},
  journal={Journal of Business $\&$ Economic Statistics},
  volume={41},
  number={3},
  pages={915--926},
  year={2023},
  publisher={Taylor \& Francis}
}

@article{cucuringu2021regularized,
  title={Regularized spectral methods for clustering signed networks},
  author={Cucuringu, Mihai and Singh, Apoorv Vikram and Sulem, D{\'e}borah and Tyagi, Hemant},
  journal={Journal of Machine Learning Research},
  volume={22},
  number={264},
  pages={1--79},
  year={2021}
}

@article{holland1983stochastic,
	title="Stochastic blockmodels: First steps",
	author="Paul W. {Holland} and Kathryn Blackmond {Laskey} and Samuel {Leinhardt}",
	journal="Social Networks",
	volume="5",
	number="2",
	pages="109--137",
	year="1983"
}

@article{karrer2011stochastic,
	title="Stochastic blockmodels and community structure in networks",
	author="Brian {Karrer} and M. E. J. {Newman}",
	journal="Physical Review E",
	volume="83",
	number="1",
	pages="16107",
	year="2011"
}

@article{bhadra2026unified,
  title={A unified framework for community detection and model selection in blockmodels},
  author={Bhadra, Subhankar and Tang, Minh and Sengupta, Srijan},
  journal={Journal of Computational and Graphical Statistics},
  volume={35},
  number={3},
  pages={1036--1049},
  year={2026},
  publisher={Taylor \& Francis}
}

@article{wu2023distributed,
  title={A distributed community detection algorithm for large scale networks under stochastic block models},
  author={Wu, Shihao and Li, Zhe and Zhu, Xuening},
  journal={Computational Statistics $\&$ Data Analysis},
  volume={187},
  pages={107794},
  year={2023},
  publisher={Elsevier}
}

@article{paul2020spectral,
author = {Subhadeep Paul and Yuguo Chen},
title = {{Spectral and matrix factorization methods for consistent community detection in multi-layer networks}},
volume = {48},
journal = {Annals of Statistics},
number = {1},
publisher = {Institute of Mathematical Statistics},
pages = {230 -- 250},
year = {2020}
}

@article{rubindelanchy2022statistical,
  title={A statistical interpretation of spectral embedding: the generalised random dot product graph},
  author={Rubin-Delanchy, Patrick and Cape, Joshua and Tang, Minh and Priebe, Carey E},
  journal={Journal of the Royal Statistical Society Series B: Statistical Methodology},
  volume={84},
  number={4},
  pages={1446--1473},
  year={2022},
  publisher={Oxford University Press}
}

@article{lin2026dynamic,
  title={Dynamic clustering for heterophilic stochastic block models with time-varying node memberships},
  author={Lin, Kevin Z and Lei, Jing},
  journal={Biometrika},
  volume={113},
  number={2},
  pages={asag018},
  year={2026},
  publisher={Oxford University Press}
}

@article{su2025randomized,
  title={Randomized Spectral Clustering for Large-Scale Multi-Layer Networks},
  author={Su, Wenqing and Guo, Xiao and Chang, Xiangyu and Yang, Ying},
  journal={Statistics and Computing},
  volume={35},
  number={6},
  pages={190},
  year={2025},
  publisher={Springer}
}
\end{document}